\documentclass[pdflatex,sn-nature]{sn-jnl}

\usepackage{graphicx}%
\usepackage{multirow}%
\usepackage{amsmath,amssymb,amsfonts}%
\usepackage{amsthm}%
\usepackage[scr=rsfso]{mathalpha}
\usepackage[title]{appendix}%
\usepackage{xcolor}%
\usepackage{textcomp}%
\usepackage{manyfoot}%
\usepackage{booktabs}%
\usepackage{algorithm}%
\usepackage{algorithmicx}%
\usepackage{algpseudocode}%
\usepackage{listings}%
\usepackage{etoc}

\theoremstyle{thmstyleone}%
\newtheorem{theorem}{Theorem}
\theoremstyle{thmstyletwo}%

\theoremstyle{thmstylethree}%

\begin{document}

\title[Article Title]{Emergence of Fibrations, Compression, and Symmetry Breaking in Artificial Neural Networks}


\author[1]{\fnm{Osvaldo M.} \sur{Velarde}}\email{ovelarde@ccny.cuny.edu}
\author[2]{\fnm{Lucas C.} \sur{Parra}}\email{parra@ccny.cuny.edu}
\author[1]{\fnm{Alireza} \sur{Hashemi}}\email{ahashemi@ccny.cuny.edu}
\author*[1]{\fnm{Hern\'an A.} \sur{Makse}}\email{hmakse@ccny.cuny.edu}

\affil*[1]{\orgdiv{Levich Institute and Physics Department}, \orgname{City College of New York}, \orgaddress{\street{160 Convent Ave}, \city{NY}, \postcode{10031}, \state{NY}, \country{USA}}}

\affil[2]{\orgdiv{Biomedical Engineering Department}, \orgname{City College of New York}, \orgaddress{\street{160 Convent Ave}, \city{NY}, \postcode{610101}, \state{NY}, \country{USA}}}


\abstract{Artificial neural networks are often regarded as powerful yet opaque ``black boxes". Here, we demonstrate that learning in deep neural networks generates local symmetries known in graph theory as fibrations and coverings. We prove that covering symmetries are stable attractors of stochastic gradient descent. Consistent with this theory, we report the emergence of covering symmetries across major network architectures, including multi-layer, convolutional, recurrent, and transformer networks. Exploiting these symmetries enables drastic model compression — reducing networks to 17\% of their original size without sacrificing performance. Furthermore, controlled breaking of covering symmetry overcomes the loss of plasticity, achieving state-of-the-art performance in continual learning. The theoretical results provide a new foundation for AI systems based on symmetries that convert black boxes into interpretable ``colored graphs" and enable more efficient inference and lifelong learning.}

\keywords{graph fibrations, network compression, continual learning, symmetry breaking}

\maketitle

\section{Main}

Despite the development of increasingly powerful neural network architectures, our understanding of their internal structure remains limited. They are often regarded as black boxes, unable to provide explanations for how their network connections capture the regularity of training data. In the absence of a theoretically grounded understanding of learning, the recent surge in artificial intelligence (AI) has been driven predominantly by empirical scaling laws \cite{kaplan2020scaling,hoffmann2022training}, resulting in ever increasing model sizes.

The reliance on scaling as a substitute for understanding creates significant challenges. Training with single large runs has massive cost and energy demands, and when trained sequentially, networks can suffer from a loss of plasticity over time
\cite{dohare2024loss,lyle2023understanding,lyle2024disentangling}.
Additionally, training large models from scratch is data-inefficient, requiring exceedingly large datasets. Without principled guidelines, architecture design becomes a costly trial-and-error process. Finally, when models are excessively over-parameterized, paradoxically, they still perform well in practice \cite{sejnowski2020unreasonable}, a subject of ongoing theoretical debate \cite{belkin2019reconciling,nakkiran2020deep}.

To address these challenges, we propose a new theoretical formalism that analyzes the internal structure of deep neural networks through the lens of graph symmetries. While global symmetry groups are fundamental to geometric deep learning (GDL) \cite{bronstein2021geometric} and our understanding of theoretical physics \cite{bronstein2021geometric, wilczek2016beautiful,
weinberg1995thequantum}, they are too rigid to capture the diversity observed in both artificial and biological neural systems \cite{makse2026symmetry,gili2025fibration,
  avila2024symmetries, velarde2024role}. For instance, existing machine learning applications rely heavily on strict global symmetries, such as shift equivariance in convolutional neural networks (CNNs) \cite{lecun1998gradient,cohen2016group} or permutation equivariance
\cite{Kipf2016semi-supervised,satorras2022en,makse1992thethermodynamics}
in graph neural networks (GNNs) \cite{bronstein2021geometric}. Instead, we identify less restrictive local symmetries that naturally emerge in the input and output trees of computational graphs. These local symmetries are known in graph theory as fibrations, opfibrations, and coverings \cite{boldi2002fibrations,makse2026symmetry}. Originally introduced by Grothendieck as maps in category theory \cite{grothendieck1959technique}, they were later adapted for graphs \cite{boldi2002fibrations,morone2020fibration}.

Here, we provide a unified mathematical framework demonstrating that stochastic gradient descent (SGD) acts as a mechanism for local symmetry formation. We prove mathematically that covering symmetries emerge in SGD due to the "synchronized learning" of weight updates. This synchronized learning induces fibration symmetries, providing a formal explanation for the emergence of neural activity synchronization, which has often been reported in deep networks \cite{velarde2024role,papyan2020prevalence,doimo2022redundant}. Furthermore, we develop a "balance coloring" algorithm \cite{kamei2013computation} to identify these structures in the weights of deep networks, and use the resulting fibrations for substantial compression of trained models with minimal impact on performance. We validate the emergence of these local symmetries across a wide range of architectures, including multilayer perceptrons (MLP), CNNs, recurrent networks (e.g., long short-term memory, LSTM), and Transformers in both supervised and reinforcement learning (RL) settings. Finally, by carefully breaking these emergent symmetries, we expand the capacity of the network to overcome loss of plasticity \cite{lyle2023understanding,lyle2024disentangling}, demonstrating state-of-the-art performance in continual learning \cite{dohare2024loss}.

\subsection*{An hierarchy of symmetries in computational graphs}
The symmetries we will discuss form a hierarchy and emerge in a
variety of network structures. For an easy visualization, consider a layered feedforward network with binary weights (where the connections
are 1 or 0), as shown in Fig.~\ref{fig:hierarchy}. In the figure,
nodes are colored to indicate their class according to
various symmetries. We will generalize these concepts to
continuous-valued weights in the next section.

\begin{figure}[htbp]
\centering
\includegraphics[scale=0.15]{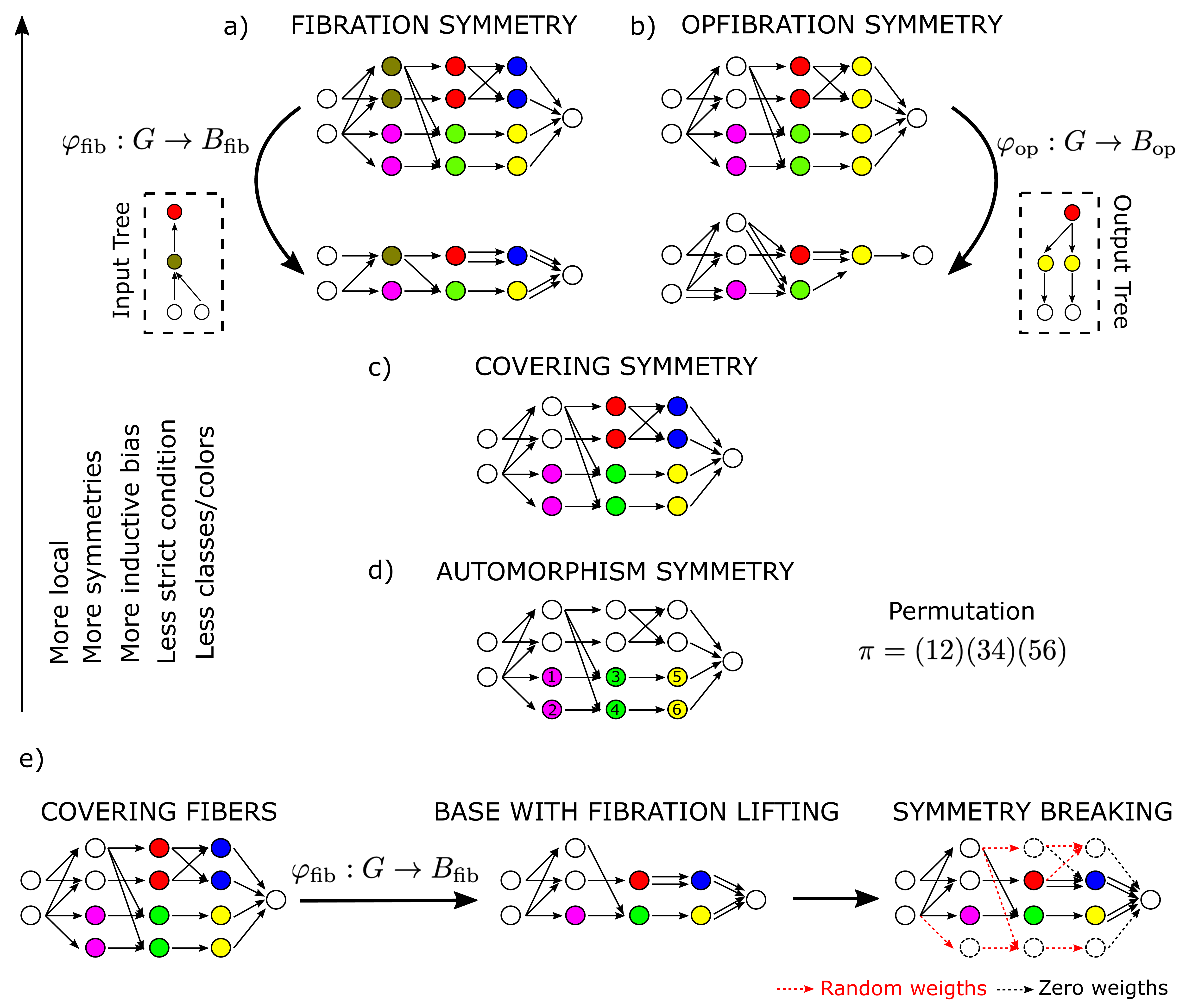}
\caption{\textbf{Hierarchy of symmetries in a feedforward
    network.} For simplicity, weights are binary, and connections with zero
  weight are not plotted. Conventional automorphism permutation is
  exemplified in the case of a graph in panel \textbf{(d)}. It identifies three
  pairs of symmetric nodes by a permutation $\pi$.  The proposed framework
  is based on the generalization to local symmetries, such as
  (op)fibrations and coverings (\textbf{a-c}); see text for detail. These symmetries have less strict conditions that can be satisfied by a larger number of nodes resulting in fewer symmetry classes (colors) (blank nodes should be
  interpreted as distinct colors). The reduced degrees of freedom increases the inductive bias. 
  \textbf{(e)} Breaking of symmetry for
  continual learning consists of two steps. Compress to
  the covering base 
  (middle) to preserve the learned
  task. Randomize or zero out redundant weights (right) to provide new degrees of freedom to continue
  learning. }
\label{fig:hierarchy}
\end{figure}

{\bf Fibration Symmetries:} Nodes possess a fibration symmetry and are said to belong to the
same \textit{fiber} if they have isomorphic input trees
\cite{makse2026symmetry,boldi2002fibrations,morone2020fibration}. This
means that the entire structure of connections from the graph's input
to the nodes is isomorphic (see Methods \ref{material:symmetries}).
Even though the input tree spans all the network, it represents the
local 'view' of the node at its root \cite{boldi2002fibrations}. In
practice, fibers are identified by a balanced coloring algorithm from
graph theory \cite{makse2026symmetry,kamei2013computation}. This
algorithm partitions the network into balanced coloring classes of
nodes (fiber partition) by iteratively assigning the same color to
nodes that receive the same set of input colors
\cite{golubitsky2006nonlinear}, hence the term \textit{balanced
  coloring}.  An example is shown in Fig.~\ref{fig:hierarchy}a, where
nodes in the same fiber are colored the same. The input tree for a red
node is shown on the left.

Fibration symmetries allow the graph $G$ to be compressed into a
smaller \textit{base} graph $B_{\rm fib}$. This compression
$\varphi_{\rm fib}: G\to B_{\rm fib}$ works by merging all nodes that
share the same color (i.e., belong to the same fiber) while conserving
the input trees (Fig.~\ref{fig:hierarchy}a, Methods
\ref{material:lifting-prop}). We will show that the resulting base
graph $B_{\rm fib}$ performs the identical forward computation 
as the original graph $G$. In deep neural
networks (DNNs), it means that we can compress the network without
losing performance.  The inverse of this compression is called
\textit{lifting} \cite{boldi2002fibrations}, an operation that 
restores the network's original dimensionality without altering 
its functional output (explained in detail in Methods \ref{material:lifting}.

{\bf Opfibration Symmetries:} A similar principle applies when learning the parameters of the network. During error backpropagation --- the canonical learning algorithm in modern AI --- the error at the output is propagated backward through the output tree. An opfibration symmetry occurs when nodes share isomorphic output trees --- the structure of connections leading from them to the output layer (Fig.~\ref{fig:hierarchy}b). 

{\bf Covering Symmetries:} When multiple nodes have isomorphic input {\em and} output trees, they form a covering symmetry and belong to a \textit{cover} (Fig.~\ref{fig:hierarchy}c). The corresponding coloring is the intersection of the fibration and opfibration coloring partitions; i.e., a finer partition of the graph. Clearly, a covering is a more stringent symmetry than fibration or opfibration.

The most strict symmetry is the \textit{automorphism}.
This is a global permutation of the network's nodes
that leaves the entire graph's connectivity unchanged.
An example of nodes with automorphism symmetry (i.e. they are in the same \textit{orbit}) is shown in Fig.~\ref{fig:hierarchy}d along with
the permutation of node labels. Although automorphisms are the
cornerstone of GDL \cite{bronstein2021geometric} and theoretical
physics
\cite{wilczek2016beautiful,weinberg1995thequantum,georgi2000lie}, this
symmetry is so restrictive that we have never observed it in our
trained networks.

In summary, these symmetries form a hierarchy of increasing
strictness: from fibrations and opfibrations, to coverings, and
finally to automorphisms. As conditions become more stringent,
there are more distinct color classes and fewer nodes within each
class (meaning fewer symmetries). Less strict local symmetries, such as 
fibrations, are more common and allow greater compression into a
more compact base. This compression results in a model with fewer
effective degrees of freedom, which corresponds to a stronger
inductive bias. For a more formal discussion on symmetries of a graph, see Methods \ref{material:symmetries}.

\section{Results}

\subsection{Theoretical results: the mathematical foundation of emergent symmetry}

Here we establish that deep learning is not merely a process of parameter-tuning, but one of symmetry formation. Learning organizes networks into local symmetries, consisting of fibrations, opfibrations, and together, forming coverings. In the following, we generalize this hierarchy to weighted computational graphs.

\subsubsection*{\bf Computational graphs and synchronization} 
The hierarchical local symmetries explained above emerge in standard
architectures like MLPs, CNNs, RNNs and Transformers (see Methods
\ref{material:hypergraphs}), but they are most easily understood in
the canonical MLP. Here, weight $W^{(\ell)}_{ik}$ connects node $k$ in layer $\ell-1$ to node
$i$ in layer $\ell$. This creates a directed weighted acyclic computational
graph. The activity $h_i^{(\ell)}$ propagates forward from input $x = h^{(0)}$ to output $y$ through a
weighted sum followed by a nonlinear activation function $\sigma$,
\begin{equation}
  h_i^{(\ell)} = \sigma \left( \sum_k W^{(\ell)}_{ik}
  h_k^{(\ell-1)} \right).
  \label{eq:mlp-activity}
\end{equation}

The network generates an error $L = \mathcal{L}(\hat{y},y)$ with respect to the desired target $\hat{y}$. This error propagates backward \cite{rumelhart1986learning} starting from the output with $\delta_i^{(N)} = \partial L / \partial h_i^{(N)}$, coupling the error with the forward activity by the derivative $\sigma'$ of the activation function:
\begin{equation}
    \delta_i^{(\ell)} = \sigma'^{(\ell)}_i \sum_k W^{(\ell+1)}_{ki}  \delta_k^{(\ell+1)}\, .  
   \label{eq:mlp-backprop}
\end{equation}

\subsubsection*{Fibrations synchronize activity, opfibrations synchronize error}

In a weighted graph, two nodes $i$ and $j$ in layer $\ell$ are in a fibration symmetry ($i \underset{\rm  fib}{\sim} j$) when they have isomorphic input trees, which mean that they receive the same summed input from the fibration colors of the previous layer. In other words, they satisfy an equal-sum criterion of their incoming weights (see Eq. \ref{eq:fibration-definition}). Because real-world networks use continuous weights, this equality constraint is verified up to a threshold $\varepsilon_{\rm fib}$
\begin{equation}
   \bigm| \sum_{k \in c} (W^{(\ell)}_{ik} - W^{(\ell)}_{jk}) \bigm| \leq \varepsilon_{\rm fib} \, ,
  \label{eq:quasi-fiber}
\end{equation}
where the sum is over all fiber colors $c$ of the previous layer.
In the limit $\varepsilon_{\rm fib} = 0$, we obtain an exact fibration symmetry ($i \underset{\rm fib}{\sim} j$). For non-zero $\varepsilon_{\rm fib}$, we refer to these as {\em quasi-fibers} following \cite{boldi2022quasifibrations}. Our first theoretical observation is that two nodes in the same fiber synchronize their activity ($h_i=h_j$) for {\em any} input  $x$ to the network (Fig.~\ref{fig:theoretical_results}a). This theorem is formalized in Methods ~\ref{sec:fibration-symmetry-activity-synchronization} and proven in Methods  \ref{material:proof-synchronization}. For quasi-fibers, the accuracy of the synchronization is bounded by a factor proportional to $\varepsilon_{\rm fib}$ (Methods \ref{material:quasi-synchronization}). Therefore, structural symmetry induces \emph{activity synchronization}. 

In the backward pass, an opfibration symmetry ($i \underset{\rm
  op}{\sim} j$) occurs when nodes share isomorphic output
trees. Because error backpropagation involves a nonlinear coupling
between the error signal and the forward activation slope through
$\sigma'^{(\ell)}_i$ in Eq. (\ref{eq:mlp-backprop}), we introduce
{\em flavored connections} to represent these dependencies
(colored arrows in Fig.~\ref{fig:theoretical_results}b; see Methods~\ref{material:symmetry-ffn} and \ref{sec:opfibration-symmetry-error-synchronization}). 
Our second theoretical result is that flavored
opfibration symmetry ensures \emph{error synchronization}: the
backpropagated error signals in an opfiber (green right sides in
Fig.~\ref{fig:theoretical_results}a) become identical
($\delta_i=\delta_j$) for any input $x$ and output $y$ (theorem proven
in Methods \ref{material:proof-synchronization}). In practice, we will use quasi-opfibers with threshold $\varepsilon_{\rm op}$ similar to the definition presented for quasi-fibers.

\begin{figure}[htbp]
\centering
\includegraphics[scale=0.15]{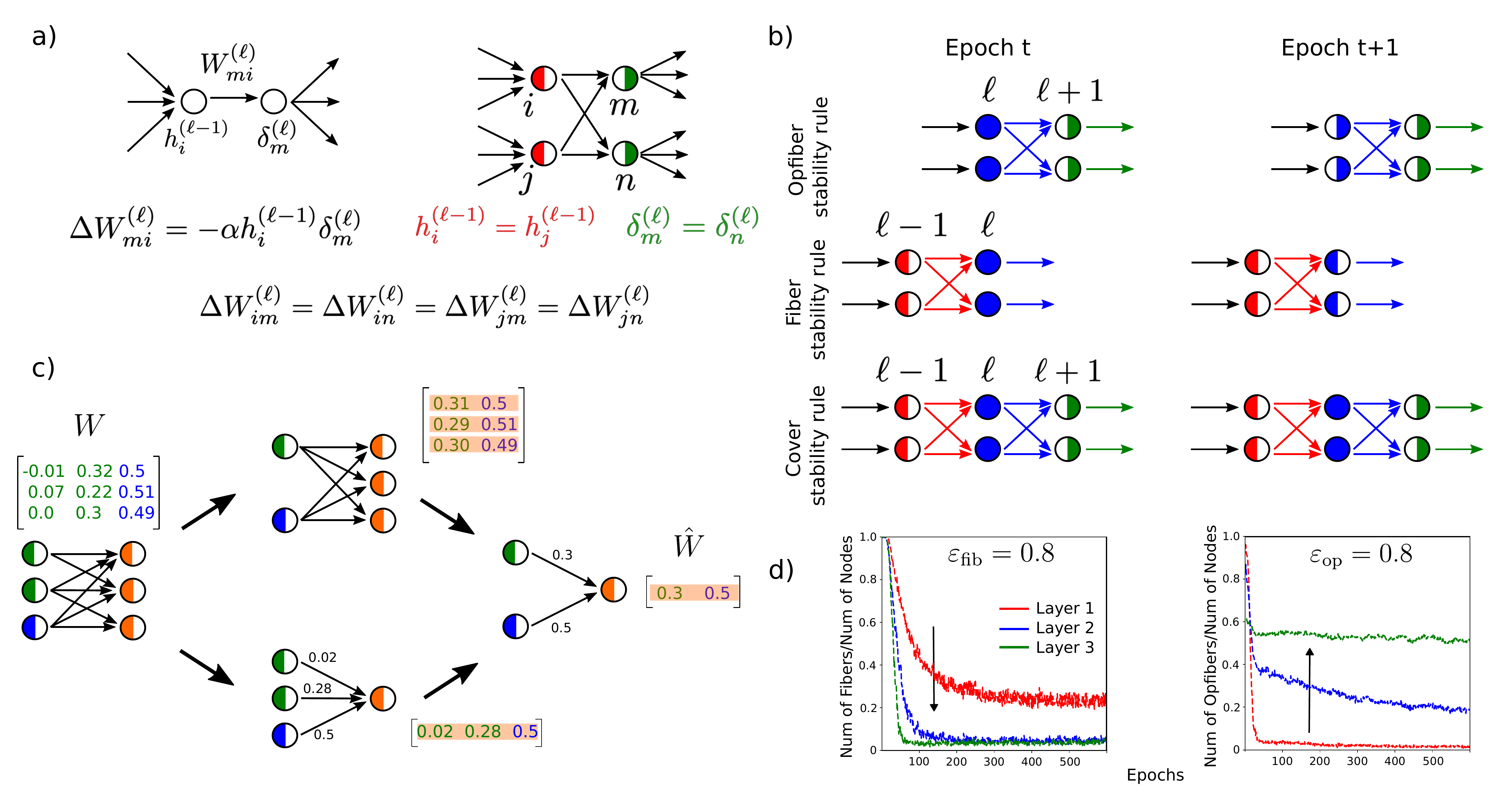}
\caption{\textbf{Summary of theoretical results}. \textbf{(a)} The gradient descent learning rule updates a connection's weight $W^{(\ell)}_{mi}$ based on the product of the activity $h^{(\ell-1)}_i$ and the error signal $\delta^{(\ell)}_m$. Nodes in the same fiber (red left sides) synchronize their activities - Eq. (\ref{eq:synch-activity}). Nodes in the same opfibers (green right sides) synchronize their error signals - Eq.~(\ref{eq:synch-error}). Weights connecting nodes in a fiber with nodes in an opfiber will have the same updates $\Delta W^{(\ell)}_{mi}$ - Eq. (\ref{eq:synch-learning}). \textbf{(b)} Stability rules used to prove the Cover Coarse-Graining Theorem. Connection colors indicate ``flavor" for the opfibers. Black connections for fibrations are not flavored. This introduces an asymmetry, with connection colors defined at the output but not at the input of nodes. \textbf{(c)} Example of fibration compression according to \ref{eq:fib-weight-compression}. Connections with zero weight are not plotted. $\hat{W}$ is calculated based on $W$ using Eq. (\ref{eq:fib-weight-compression}). \textbf{(d)} Number of fibers (left) and opfibers (right) in distinct layers across learning epochs. As nodes group into fibers and opfibers, the number of distinct classes decreases, and they grow in size. Later layers in the networks (black arrow) cluster into fibers more quickly. For opfibers, initial layers cluster more quickly.} 
\label{fig:theoretical_results}
\end{figure}

\subsubsection*{\bf Synchronized learning}

During learning, the gradient of the loss $L$ with respect to the weights $W^{(\ell)}_{ik}$ 
depends locally on the activity of the node and the error signals
\cite{rumelhart1986learning}:

\begin{equation}
    \Delta W^{(\ell)}_{mi} = - \alpha \frac{\partial L}{\partial W^{(\ell)}_{mi}}= -\alpha \delta^{(\ell)}_m \cdot h^{(\ell-1)}_i \,
    ,
    \label{weight-update}
\end{equation}
where $\alpha$ is a learning rate (see Methods \ref{material:gd}). When the input nodes in a layer are in a fiber and the output nodes are in a flavored opfiber, then activity and error are synchronized. Consequently, the corresponding weight updates under gradient descent (GD) become identical (Fig. ~\ref{fig:theoretical_results}a right):

\begin{equation}
    \Delta W^{(\ell)}_{mi} = \Delta W^{(\ell)}_{mj}
    = 
    \Delta W^{(\ell)}_{ni}  = \Delta W^{(\ell)}_{nj}\, .
\label{eq:synch-learning} 
\end{equation}
We refer to this as {\em synchronized learning}. This theorem is proven in  Methods~\ref{material:proof-synchron}).

\subsubsection*{Covers and the emergence of symmetry}

We now turn to the central finding of this work, which is that these symmetries emerge naturally during learning with stochastic gradient decent (SGD).
When a neural network is initialized, its weights are set to
random values. Large random graphs of this kind rarely have
significant global automorphism symmetries as they are forbidden by
Erd\H{o}s-R\'enyi asymmetry theorem \cite{erdos1963asymmetric}.
However, the training process itself appears to create structure. For
example, studies have observed that SGD causes different nodes to
develop similar input and output weights \cite{chen2023stochastic} and
redundancies \cite{papyan2020prevalence,doimo2022redundant}. We have
also previously observed the emergence of \textit{synchronized} nodes during training
\cite{velarde2024role}, suggesting fibration symmetries.

To understand the emergence of these symmetries, we have to consider \emph{covering symmetry} ($i \underset{\rm cov}{\sim} j$), which occurs when the nodes are in both fibration and opfibration symmetry. As with fibers and opfibers, covers partition the nodes in a layer into groups with the same colors, e.g. Fig.~\ref{fig:hierarchy}c. The main theoretical result is that these covering partitions $\mathcal{C}^{\rm cov}$ are preserved under GD, which is formalized in the following Theorem:

\begin{theorem} 
\label{theorem:covering}
\textbf{Cover Coarse-Graining Theorem.} 
\textit{
Under the dynamic of GD, the covering partition $\mathcal{C}^{\rm cov}_{\ell}(t+1)$ is a coarsening of  $\mathcal{C}^{\rm cov}_{\ell}(t)$.
}
\end{theorem}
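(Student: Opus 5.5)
The plan is to show that if two nodes $i,j$ in layer $\ell$ are in the same cover at step $t$, they are still in the same cover at step $t+1$. Equivalently, every block of $\mathcal{C}^{\rm cov}_{\ell}(t)$ sits inside a block of $\mathcal{C}^{\rm cov}_{\ell}(t+1)$. This must hold simultaneously for all layers, so I would argue by induction over layers. Fibration is checked forward from the input layer, and flavored opfibration is checked backward from the output layer.

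First, I fix the covering partitions $\mathcal{C}^{\rm cov}_{\ell}(t)$ for all $\ell$ at step $t$. For every covering class $c$ of layer $\ell-1$ and every $m$ in a covering class of layer $\ell$, I record the balance condition
\[
\sum_{k\in c} W^{(\ell)}_{mk}(t) = \sum_{k\in c} W^{(\ell)}_{nk}(t) \quad (m \underset{\rm cov}{\sim} n),
\]
together with the dual column-sum condition for opfibers, $\sum_{m\in d} W^{(\ell)}_{mi}(t)=\sum_{m\in d} W^{(\ell)}_{mj}(t)$ for $i \underset{\rm cov}{\sim} j$ and every class $d$ of layer $\ell$, including flavors. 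Because covers refine fibers and opfibers, the synchronization theorems of Methods \ref{material:proof-synchronization} apply: $h_i=h_j$ and $\delta_i=\delta_j$ for every input/target pair whenever $i \underset{\rm cov}{\sim} j$. Synchronized learning (Eq.~\ref{eq:synch-learning}) then gives $\Delta W^{(\ell)}_{mi}$ depending only on the pair of cover classes of $m$ and $i$. For SGD the update is an average over a minibatch, but linearity preserves this because the synchronization holds for each sample.

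Second, I show the balance conditions survive the update. For $m\underset{\rm cov}{\sim} n$ and class $c$,
\[
\sum_{k\in c}\Delta W^{(\ell)}_{mk}=|c|\,\Delta_{[m],c}=\sum_{k\in c}\Delta W^{(\ell)}_{nk},
\]
and likewise for column sums over classes $d$. Hence $W(t+1)$ still satisfies every row- and column-balance condition of the partition $\mathcal{C}^{\rm cov}(t)$. So $\mathcal{C}^{\rm cov}(t)$ remains an equitable (balanced) partition for both the input and output structure at $t+1$.

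Third, I use the characterization from Methods \ref{material:symmetries}: the covering partition is the \emph{coarsest} partition satisfying these balance conditions, as computed by the balanced coloring algorithm. Every balanced partition refines the coarsest one. Therefore $\mathcal{C}^{\rm cov}(t)$ refines $\mathcal{C}^{\rm cov}(t+1)$, which is exactly the claim that $\mathcal{C}^{\rm cov}_{\ell}(t+1)$ is a coarsening of $\mathcal{C}^{\rm cov}_{\ell}(t)$.

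I expect the main obstacle to be the flavors in the opfibration part. The condition involves $\sigma'$ evaluated at the synchronized pre-activations, and I must check that flavor labels are still consistent at $t+1$. My first attempt would be to note that flavors are determined by the fiber classes, whose preservation follows from the forward balance condition just established. A second subtlety is that synchronization must hold with respect to the coarsest partition at the new step, not only the old one. This is handled by the existence of a unique coarsest balanced partition, which I would cite from the balanced-coloring theory.
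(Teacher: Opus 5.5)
There is a genuine gap in Step~1, and Step~3 inherits it. You treat $\mathcal{C}^{\rm cov}$ as the coarsest partition that is in- and out-balanced \emph{over its own classes}. But Eq.~(\ref{eq:cover-definition}) defines it as the intersection of the fiber and flavored opfiber partitions. So row sums are balanced only over fiber classes of layer $\ell-1$, and column sums only over opfiber classes of layer $\ell+1$; both are coarser than cover classes. (The sentence in Methods~\ref{material:symmetries} calling covers in- and out-balanced colorings is inconsistent with Eq.~(\ref{eq:cover-definition}).)

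Here is a counterexample. Take one input $x$; layer-1 nodes $a,b$ with $W_{ax}=W_{bx}=1$; layer-2 nodes $i,j,p$ with $(W_{ia},W_{ib})=(1,0)$, $(W_{ja},W_{jb})=(0,1)$, $(W_{pa},W_{pb})=(1,0)$; one output $y$ with $(W_{yi},W_{yj},W_{yp})=(1,1,2)$; linear units and squared loss.
\begin{itemize}
\item The fibers are $\{a,b\}$ and $\{i,j,p\}$, and the opfibers are $\{a\},\{b\}$ and $\{i,j\},\{p\}$. So the covers are $\{a\},\{b\}$ and $\{i,j\},\{p\}$.
\item Thus $i\underset{\rm cov}{\sim}j$ although $W_{ia}\neq W_{ja}$, so the balance condition you record fails.
\item Worse, with $e=y-\hat y$ one gets $\delta_a=3e$ and $\delta_b=e$. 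For a sample with $ex\neq 0$, one GD step makes $W_{ax}-W_{bx}=-2\alpha e x\neq 0$, so the fiber $\{a,b\}$ splits.
\item Meanwhile $W_{ia}-W_{ja}=1$ is unchanged because $\delta_i=\delta_j$. Hence $i$ and $j$ leave their fiber and their cover.
\end{itemize}
So for the partition of Eq.~(\ref{eq:cover-definition}) the gap cannot be closed: the statement itself fails. The paper's proof pushes the fiber stability rule forward from $\ell=0$ and the opfiber rule backward from $\ell=N$. It breaks at the same point: from ``cover pairs stay in a fiber'' it infers $\mathcal{C}^{\rm fib}_1(t+1)\ge\mathcal{C}^{\rm fib}_1(t)$, which the pair $a,b$ violates.

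What your argument does prove, correctly, is the theorem for a self-consistent notion. Let $\mathcal{C}^{\ast}(t)$ be the coarsest layered partition, discrete on the input and output layers, whose classes are in-balanced over its own classes in layer $\ell-1$ and out-balanced over its own classes in layer $\ell+1$. This is the weighted analogue of a single map that is both a fibration and an opfibration. It refines $\mathcal{C}^{\rm cov}$, strictly in the example above. For $\mathcal{C}^{\ast}$ your Steps~1--3 go through: the old partition stays balanced after the update, so the maximal balanced partition at $t+1$ dominates it. That global argument is a valid repair, whereas the paper's layer-by-layer recursion is not. You still need two things:
\begin{itemize}
\item Show that the join of two bi-balanced partitions is again bi-balanced. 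Each class of the join is a union of classes of either partition, so each link of a chain connecting two nodes preserves the relevant sums. This gives the unique coarsest partition you invoke.
\item Drop the claim that flavors are handled because fiber classes are preserved; they are not (again $a,b$). Flavors are unnecessary here: in-balance gives equal pre-activations, hence equal $\sigma'$, and backward induction on the out-balance conditions gives $\delta_i=\delta_j$ directly.
\end{itemize}
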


This theorem (proven in Methods \ref{material:covering-theorem-proof}) dictates that distinct covers can merge to form larger (coarser) covers, but once multiple nodes join a cover, they cannot exit. Therefore, the weights of covers constitute an invariant set. Any parameter set that is invariant under GD becomes a stable attractor under stochastic GD for sufficiently large learning rates  \cite{chen2023stochastic}. We therefore conclude that covers are stable attractors of the SGD dynamics. 

The theoretical results presented so far readily generalize to other computational graphs, including hypergraphs, as we discuss in the Methods \ref{material:hypergraphs}. This includes operations such as convolutions (in CNN), multiplicative gating (in most RNN and LSTM in particular), residual connections, and attention gates, which is the core innovation of transformer networks. 

\subsubsection*{Compression rule preserves activity and loss}

Each symmetry induces a more compact base graph, whereby all nodes of the same color are merged. We propose the following {\em compression rule} for the weights in the base graph, where the connection from color $c$ to color $c'$ is the mean across nodes in $c'$ and the sum across nodes in $c$ (exemplified in Fig.~\ref{fig:theoretical_results}c):
\begin{equation}
    \hat{W}_{c'c}^{(\ell)} = \frac{1}{|c'|} \sum_{i \in c'} \sum_{k \in c} W_{ik}^{(\ell)}.
    \label{eq:fib-weight-compression}
\end{equation}
When colors are obtained using fibration symmetry, we will refer to Eq. \ref{eq:fib-weight-compression} as the \textit{fibration compression rule}; when colors are obtained using covering symmetry, we will refer to it as the \textit{covering compression rule}. 

This compression rule strictly maintains the identical forward computation (Methods \ref{sec:fiber-compression-MLP}), preserving the network's exact loss function. In the case of quasi-fibers and quasi-covers, the activity and loss are preserved only approximately, bounded by the precision and a data-dependent positive coefficient $K^{(\ell)}_{c}$: 
\begin{eqnarray}
\Delta L^{(\ell)} \lesssim K^{(\ell)}_{c} \varepsilon_{\rm op} \varepsilon_{\rm fib}  .
\label{eq:loss-change-bound}
\end{eqnarray}

The compression rule (\ref{eq:fib-weight-compression}) can be applied in different network architectures (Methods \ref{material:hypergraphs}). In CNNs, features are treated as nodes and can be compressed accordingly (Methods \ref{sec:fiber-compression-CNN}, Fig. \ref{fig:compression_architec}a). In RNNs, such as the LSTM, cover symmetries emerge in the gating mechanisms, with the weight matrices of the forget and input gates playing the role of fibers and opfibers (Methods \ref{sec:fiber-compression-hypergraph}, Fig. \ref{fig:compression_architec}b). In transformer networks, symmetries emerge in the weight matrices of the query and key (Methods \ref{sec:fiber-compression-hypergraph}, Fig. \ref{fig:compression_architec}c), while the value matrix can be treated the same as the weight matrices in an MLP.

\subsection{Empirical results: emergence, compression and breaking of symmetries}

In the following sections, we will empirically demonstrate the emergence of cover symmetries during stochastic gradient descent, and argue that this symmetry formation captures regularity in the data.  We also show that these symmetries allow substantial compression of networks with negligible change in performance, while outperforming existing pruning methods. Finally, we argue that the emergence of covering symmetry results in the loss of plasticity in deep networks and show empirically how fiber symmetry breaking can achieve a new state-of-the-art in continual learning.

\subsubsection*{Emergence of symmetries during learning}

First, we provide empirical evidence for the emergence of fibers, opfibers and covers (Fig.~\ref{fig:theoretical_results}d) during SGD for an MLP trained in the MNIST classification problem  (Methods \ref{sec:experiments_cfg}). Since weights are initialized with random values, at the start of learning, all nodes form trivial (single-node) fibers. As learning progresses, the nodes start to partition into multi-node covers, forming fibers, so that the number of unique fibers and opfibers decreases. We observe larger fibers in later layers of the network (Fig.~\ref{fig:theoretical_results}d, left), which is explained in our theory by the increasingly more relaxed conditions for the fibers: Each layer removes an additional degree of freedom whenever two nodes satisfy the fiber equation (\ref{eq:fibration-definition}). As we move across layers, the subspace of solutions increases and the fibers grow in size. Similar reasoning explains why opfibers are larger in the initial layers (Fig.~\ref{fig:theoretical_results}d, right).

The temporal dynamic of cover and fiber formation is illustrated in the accompanying 
\href{https://github.com/makselab/fibrations_in_dnns/blob/main/MNIST_Symmetries/animation.mp4} {video S1}. The video shows that (quasi-) opfibers first appear in the output layers and their emergence propagates backwards across layers, while (quasi-) fibers first emerge in the first layer and propagate to the output. This is consistent with the stability rules (Fig.~\ref{fig:theoretical_results}b), that are the basis of the covering theorem (Methods \ref{material:covering-theorem-proof}). The rules dictate that fibers in one layer preserve fibers in the next, while opfibers preserve opfibers in the preceding layer. 
Thus, opfiber formation tends to propagate backward from the output layer, while fiber formation tends to propagate forward from the input as learning progresses. We confirm this empirically for quasi-fibers and quasi-covers in an MLP in Fig.~\ref{fig:results-formation-and-pruning}a. 

We also demonstrate the emergence of symmetry in a CNN trained in ImageNet classification and in an LSTM trained with reinforcement learning to play the Atari game Beam Rider  (Methods \ref{sec:experiments_cfg}). In the CNN, quasi-covers grow in both convolutional and dense layers as training progresses (i.e. number of unique covers decreases), and classification accuracy improves (Fig. \ref{fig:results-formation-and-pruning}c). For the LSTM, quasi-covers first grow in size, i.e., unique nodes decrease, but these approximate symmetries break upon further training (Fig. \ref{fig:results-formation-and-pruning}e) due to exploration incentives in the reinforcement learning algorithms (Methods \ref{sec:experiments_cfg}). 

\subsubsection*{Synchronization captures regularity in the data}

The activity synchronization theorem, Eq. (\ref{eq:synch-activity}),
establishes that the nodes of a fiber have the same activity for all
inputs. We ask how node synchronization
changes when we limit the input space and, in particular, if we draw
inputs from individual classes in a classification task. We find that
the correlation matrices for node activity with data drawn from 
individual classes form large clusters of
synchronized nodes (Fig. \ref{fig:clusters_class}).  These class-conditional
clusters are a coarsening of the partitioning into fibers (see Methods
\ref{material:fibers-clusters} - Fig. \ref{fig:scores}). Thus,
each class-conditional synchrony cluster can be decomposed into
multiple fibers, forming hierarchical clusters of nodes. The increasing size of fibers across layers is paralleled by the observed node synchronization (Fig.~\ref{fig:clusters_class}), which is more prevalent in later layers of the network. 

One can interpret the class-conditional synchronization clusters  in terms of features of the input: It is well accepted that node activity in hidden layers captures a hierarchy of features in the stimulus, that are shared across classes; e.g. simple examples for 2D CNN are edges, junctions, and corners
\cite{zeiler2014visualizing}. The intersection of all class-conditional clusters, capturing all the shared features, is embodied in the fibers. The increasing size of fibers across layers captures the hierarchical organization of features.  Therefore, fibers capture the hierarchical regularity present in all training data. In the course of training, these clusters of synchronization emerge in a hierarchical process (Fig.~\ref{fig:sync}a,b) consistent with the Coarse-Graining Theorem (Methods \ref{sec:hierarchical}). 

\subsubsection*{Fibration and covering compression versus performance}

Once fibers have formed, we can compress the network. As we relax the thresholds $\varepsilon_{\rm fib}$ and $\varepsilon_{\rm op}$ a larger number of nodes can be grouped into quasi-fiber and -opfiber, and we achieve stronger network compression. In practice, we have found that compression with the same value $\varepsilon$ produces different loss changes $\Delta L$ in different layers, consistent with the dependence of $K^{(\ell)}_c$ in Eq. (\ref{eq:loss-change-bound}). We therefore adopted layer-dependent thresholds and developed efficient search algorithms to find the optimal set of $\varepsilon_{\rm fib}^{(\ell)}, \varepsilon_{\rm op}^{(\ell)}$ to achieve maximum compression with a fixed budget $\Delta L = \varepsilon_{\rm Loss}$ (see Methods \ref{si:loss-threshold-search}). 

\begin{figure}[htbp]
\centering
\includegraphics[scale=0.20]{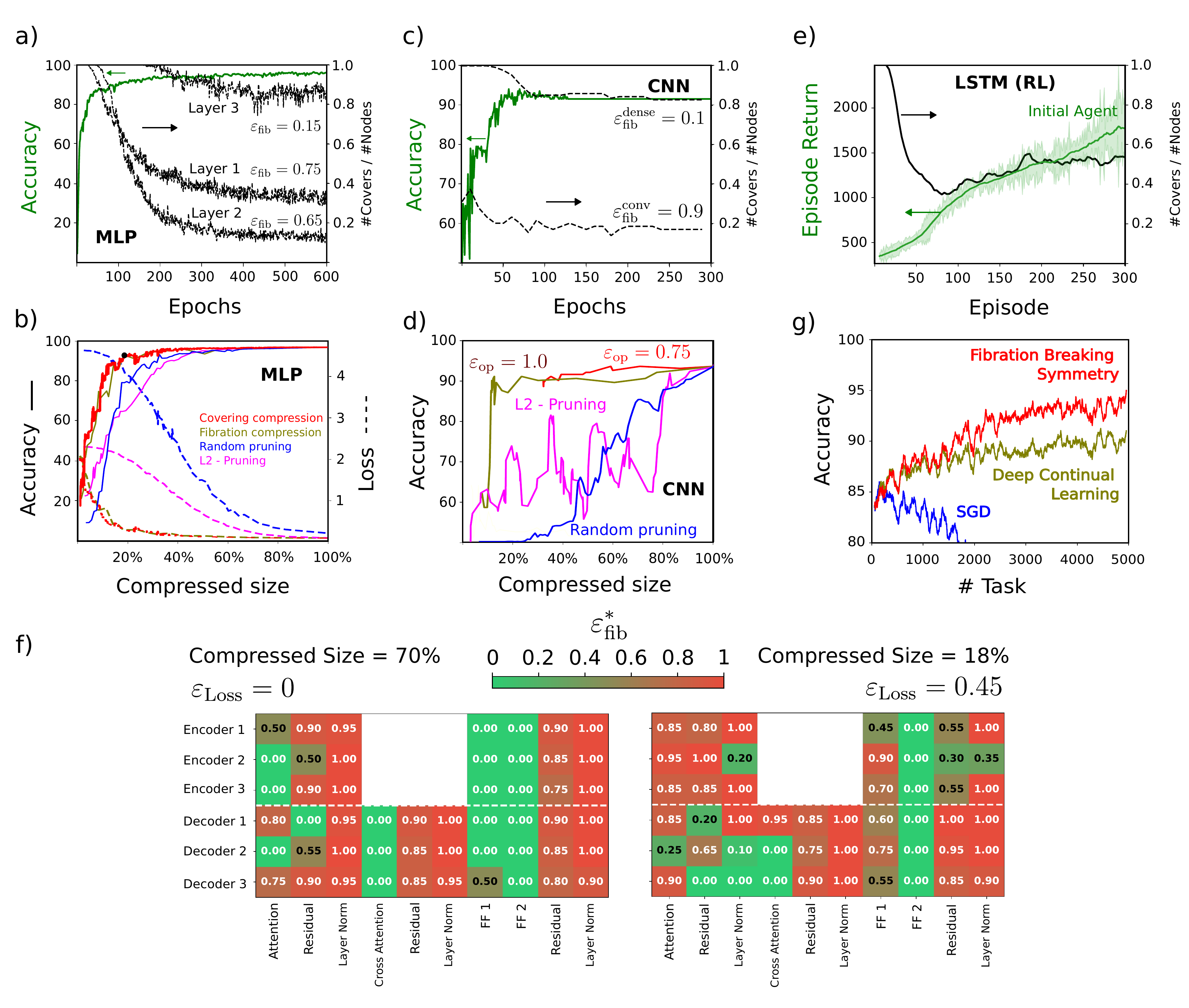}
\caption{
{\bf Empirical results.} 
{\bf (a)} MLP trained on MNIST dataset. Classification accuracy increases across learning epochs (green), while the number of trivial (single-node) quasi-covers ($\varepsilon_{\rm op} = 1$) decreases in favor of larger multi-node quasi-covers (black dashed curves). 
{\bf (b)} Accuracy of the compressed network depends on the level of compression with fibration compression (olive) and covering compression (red).  Alternative pruning methods at different levels of compression (blue and pink curves). Dashed curves indicate the change in loss. 
{\bf (c)} Same as (a) but for CNN trained on ImageNet. 
{\bf (d)} Same as (b) for CNN. Red and olive curves correspond to optimal covering compression with different values of $\varepsilon_{\text{op}}$.
{\bf (e)} Agent with LSTM trained via reinforcement learning to play the Atari game Beam Rider. Returns increase with learning episode (green curve). Black curve shows the proportion of quasi-covers ($\varepsilon_{\rm fib} = 0.5, \varepsilon_{\rm op} = 1.0$) as a function of training episode. In panels (a)-(c)-(e) arrows point to the axis associated with each curve.
{\bf (f)} Compression of a sequence-to-sequence Transformer trained for German-English translation. Optimal $\vec{\varepsilon}^{*}_{\rm fib}$ obtained for each bock in the network architecture. Red indicated stronger compression, green, no compression. Tolerance $\varepsilon_{\rm Loss}=0$ accepts no change in loss (left) or a threshold value $\varepsilon_{\rm Loss}=0.45$ that will not affect translation task performance (right) (see Fig. \ref{fig:metrics_MLP_CNN}c, right).
{\bf (g)} Continual learning with sequential ImageNet binary classification tasks, using conventional SGD (blue),  Continual Backpropagation (brown), and the proposed FSB (red).
}
\label{fig:results-formation-and-pruning}
\end{figure}
{\bf Compression in MLPs:} For a three-layer MLP trained in MNIST, the optimal covering compression (Fig. \ref{fig:metrics_MLP_CNN}a, right, red curve) achieves substantial compression of down to 17\% of the original network size with no change in loss or accuracy (Fig. \ref{fig:results-formation-and-pruning}b, red dashed and solid curves, respectively). We obtain similar results when we limit the search to fibration compression (i.e. we search for three optimal $\varepsilon_{\rm fib}^{(\ell)}$ while keeping $\varepsilon_{\rm op}=1$, Fig. \ref{fig:metrics_MLP_CNN}, right, olive).  We find that covering compression (\ref{fig:results-formation-and-pruning}b, red) tends to outperform fibration compression (Fig. \ref{fig:results-formation-and-pruning}b, olive).  We compared this with alternative compression methods \cite{meng2020pruning} such as randomly pruning nodes (blue curve) and pruning nodes with small weights (L2-norm, pink curve). Fibration and covering compression identify and merge nodes that play identical functional roles and thus have no impact on loss. In contrast, these conventional pruning methods remove potentially unique contributions and therefore immediately impact the loss. 

{\bf Compression in CNNs:} Similar results are observed for a CNN trained on ImageNet (Fig. \ref{fig:results-formation-and-pruning}d). The red curve represents covering compression with all $\varepsilon_{\rm op} = 0.75$. The olive curve, in turn, corresponds to $\varepsilon_{\rm op} = 1.0$, which is equivalent to fibration compression. Again, we find that covering compression outperformed fibration compression as well as conventional pruning methods.  Here we found the optimal fibration thresholds at a given opfiber threshold (see Fig. \ref{fig:metrics_MLP_CNN}, right). 

{\bf Compression in Transformers:} We also tested compression in a sequence-to-sequence Transformer trained for German-English translation (Methods~\ref{sec:experiments_cfg}) using the same data and architecture as in previous work \cite{vaswani2023attention}. With three encoder and three decoder layers, the network has 41 different blocks, each with its own $\varepsilon_{\rm fib}$ (\ref{fig:results-formation-and-pruning}f). We only tested fibration compression (with fixed $\varepsilon_{\rm op} = 1$). The optimal values of ${\varepsilon}^{*}_{\rm fib}$ are shown in the matrices of Fig. \ref{fig:results-formation-and-pruning}f. Note that $\varepsilon_{\rm fib} = 1$ implies strong compression, while values close to 0 imply weak compression. Despite a strict limit to the change in loss ($\varepsilon_{\rm Loss}=0$), we can compress the network to $70\%$ of its size (Fig. \ref{fig:results-formation-and-pruning}f, left). However, we can tolerate some increase in loss without affecting translation performance (Fig. \ref{fig:metrics_MLP_CNN}c, left). At this tolerance ($\varepsilon_{\rm Loss}=0.45$) the transformer can be compressed to $18\%$ of its original size (Fig. \ref{fig:results-formation-and-pruning}f, right).

\subsubsection*{Fibration symmetry breaking for continual learning}

Deep networks have been found to lose their ability to learn when trained sequentially on multiple new tasks \cite{lyle2023understanding, lyle2024disentangling}. We evaluate this on the continuous ImageNet task \cite{vandeven2022three} (Methods \ref{material:resultsDCL}). First, we observe that conventional SGD stops learning after approximately 2,000 tasks (Fig. \ref{fig:results-formation-and-pruning}g, blue curve), reproducing the results of \cite{dohare2024loss}. We find that this stagnation is accompanied by cover formations (Fig. \ref{fig:distrib_fibers}). 
The emergence of symmetry induces parameter tying, which reduces the degrees of freedom in the parameter space. This constraint directly explains the observed loss of network plasticity: with fewer independent directions for optimization, learning stagnates.

Fibration symmetries that emerge during SGD reduce the network's degrees of freedom and manifest as redundant, synchronized activity that limits the repertoire of learnable features. While several ad-hoc heuristics attempt to promote diversity (see Discussion), they operate indiscriminately by preventing or breaking the fibers that naturally emerge from the data with SGD.

In contrast, we propose {\em Fibration Symmetry Breaking} (FSB), a principled and targeted approach to symmetry breaking \cite{anderson1972more} by first compressing covers and then randomly choosing the remaining weights 
as follows (Fig. \ref{fig:hierarchy}e):
\begin{enumerate}  
\item Prune: Identify covers and reduce the graph to the covering base by
  applying the fibration compression rule
  Eq. (\ref{eq:fib-weight-compression}) to ensure that the learned
  function of the network and its performance are preserved.
\item Sprout: Add naive nodes that restore the original size. Each new node is
  initialized with random input and output weights (with zero
  outgoing weights to the nodes in the base). This restores degrees of
  freedom for further learning without interfering with the learned
  base.
\end{enumerate}

We compared performance with the state-of-the-art ``Continual Backpropagation'' \cite{dohare2024loss}, which has been shown to outperform existing heuristics such as shrink-and-perturb \cite{chebykin_shrink-perturb_2023}. Continual backpropagation resets the nodes that have become inactive. In Methods \ref{material:DCL}, we show that this technique is a special case of our symmetry-breaking mechanism. In the sequential ImageNet task (Fig. \ref{fig:results-formation-and-pruning}g) continual backpropagation (green) and our method FSB (red) are comparable up to task 1,500. However, beyond that point, our method outperforms continual backpropagation, cutting the remaining deficit in accuracy by half (from 90\% to 95\% by task 5,000). This indicates that the FSB continues to capture useful features that generalize between classes, outperforming the state of the art.

\section{Discussion}

Our work frames deep learning not as a process of brute-force parameter-tuning but as a fundamental mechanism of structure formation. Through this lens, optimization naturally aligns a network's internal topology to the latent geometry of the data, isolating a compact set of features to describe the data. This framework provides a conclusive, mathematically rigorous explanation for long-observed empirical phenomena that have previously lacked a unified theoretical basis. For instance, the spontaneous emergence of fibration symmetry explains ``node collapse" and the generation of redundant, synchronized neural activity, particularly within the deeper layers of over-parameterized models \cite{papyan2020prevalence, doimo2022redundant, velarde2024role}. We show that stochastic gradient descent actively stabilizes these covering symmetries, acting similarly to noise-induced stabilization in complex dynamical systems \cite{herzog2015noise}. We propose that fibration symmetries emerge from a hierarchical loss landscape (Methods \ref{sec:hierarchical}). By demonstrating that networks navigate toward increasingly coarse symmetry structures during optimization, this geometric perspective resolves the classic paradox of over-parameterization: gradient descent finds structured symmetric solutions where the emergent model is dramatically simpler than the raw parameter counts suggest.

Beyond explaining the learning dynamics, this framework unifies a wide array of seemingly disparate ad-hoc AI heuristics under a single theoretical view. Hard-coded inductive biases, such as weight-tying in CNNs, can now be understood as simply pre-defined graph symmetries; our results demonstrate how such constraints can instead emerge organically through learning. Crucially, our theory explains why a variety of popular empirical techniques successfully enhance network performance by actively breaking or preventing these symmetries. For example, dropout \cite{srivastava_dropout_2014}  breaks symmetries by updating only a fraction of nodes selected at random, while residual connections \cite{he2016deep} break symmetries that would otherwise develop within the skipped layers (Methods~\ref{material:hypergraphs}). Similarly, methods designed to explicitly avoid representation redundancy \cite{wang2021convolutional} directly prevent fiber formation; this includes structural redundancy reduction algorithms \cite{zbontar2021barlow} (Methods \ref{si:orthogonal}), feature recombination \cite{qiu2021slimconv}, or minimization for cross-correlation  \cite{zbontar2021barlow} (Methods \ref{si:orthogonal}). While these existing approaches operate blindly, our symmetry-driven pruning offers an exact, theory-motivated methodology for model compression. Because fibration compression merges functionally redundant nodes rather than pruning unique computations, it preserves the exact function of the model, compressing models to 17-18\% or their original size. This structural compression complements downstream techniques such as weight quantization \cite{sharma2023truth,tomut2024compactifai} or linear low-rank matrix decompositions (Methods \ref{sec:compression-SVD}).

Importantly, our framework directly resolves the critical trade-off between stability and flexibility in lifelong learning systems. The emergence of covers under standard SGD induces parameter tying, which restricts optimization and directly causes the well-documented loss of network plasticity over time \cite{lyle2024disentangling}. Rather than applying indiscriminate resets to final layers or specific nodes, which often fail to fully overcome this stagnation, our Fibration Symmetry Breaking protocol offers a surgical alternative: it compresses the emergent covers to safeguard past knowledge and injects randomized, naive nodes to reclaim lost degrees of freedom without disrupting the learned base. The resulting 50\% reduction in the performance gap on sequential ImageNet benchmarks compared to state-of-the-art continuous backpropagation \cite{dohare2024loss} establishes FSB as a highly plausible foundation for true lifelong learning systems. 

Several theoretical and practical horizons remain open for investigation. First, establishing the optimal sequencing to combine fibration compression with low-rank  and quantization pipelines represents an immediate engineering frontier. Second, while our current approach relies on static model compression after training, developing algorithms for dynamic  compression during training may allow stronger compression factors, or  conversely, continual learning with larger datasets. Finally, while our framework outlines how optimization trajectories navigate downhill through a tree of increasingly coarse-grained symmetry quotients, the relation of this to the hierarchical structures in the data itself remains to be more strictly formalized.

In conclusion, our work frames learning not as parameter-tuning, but as structure formation. The process transforms the opaque black box into an interpretable colored graph, where emergent symmetries reveal the data's learned regularities. This theory-driven perspective offers a path to designing future AIs in which mathematical principles, not brute-force scaling,  drive performance, generalization, and interpretability.

\section{Methods}
\localtableofcontents

\subsection{Global and local symmetries in graphs: coverings, fibrations, and opfibrations}
\label{material:symmetries}

We first discuss symmetries in a binary (unweighted) graph where connections are restricted to binary values (0 or 1) to introduce the main concepts didactically, following Fig. \ref{fig:hierarchy}.
Graph fibrations were first introduced for binary graphs by Boldi and Vigna \cite{boldi2002fibrations} based on the categorical definition of Grothendieck fibration from \cite{grothendieck1959technique}.
Methods \ref{material:symmetry-ffn}
generalizes the definition to weighted graphs and hypergraphs for applications to DNNs.

A directed graph $G=(N_G,A_G)$ consists of a set $N_G$ of nodes and a set $A_G$ of connections. Each connection is associated with a source and a target node, and one way to represent the full structure of the graph is its adjacency matrix. For every node $u \in N_G$, there is a corresponding input tree $T_u$ that represents the set of all paths of $G$ ending in $u$. Similarly, there exists a corresponding output tree $\hat{T}_u$ that represents the set of all paths of $G$ starting from $u$. We say that two input trees $T_u$ and $T_v$ are isomorphic ($T_u \sim T_v$) when there is a bijective map $\tau_{u\to v}: T_u \rightarrow T_v$, which maps the nodes and connections of $T_u$ one-to-one to the nodes and connections of $T_v$. The same definition and notation apply to output trees with a bijective map denoted by $\hat{\tau}_{u\to v}: \hat{T}_u \rightarrow \hat{T}_v$.

For graph $G$,
\begin{enumerate}
    \item A \textit{fibration symmetry} of $G$ is a surjective homomorphism $\varphi_{\rm fib}: G \to B$ that preserves the input tree:
    \begin{equation}
        \label{eq:fibration}
        \forall u,v \in N_G: \varphi_{\rm fib}(u)=\varphi_{\rm fib}(v) \in N_B \iff T_u \sim T_v.
    \end{equation}
    \item An \textit{opfibration symmetry} of $G$ is a surjective homomorphism $\varphi_{\rm op}: G \to B$ that preserves the output tree:
    \begin{equation}
        \label{eq:opfibration}
        \forall u,v \in N_G: \varphi_{\rm op}(u)=\varphi_{\rm op}(v) \in N_B \iff \hat{T}_u \sim \hat{T}_v.
    \end{equation}
    \item A \textit{covering symmetry} of $G$ is a surjective homomorphism $\varphi_{\rm cov}: G \to B$ that preserves the input and output trees:
    \begin{equation}
        \label{eq:covering}
        \forall u,v \in N_G: \varphi_{\rm cov}(u)=\varphi_{\rm cov}(v) \in N_B \iff T_u \sim T_v \qquad \& \qquad \hat{T}_u \sim \hat{T}_v.
    \end{equation}
    
    \item  An {\it automorphism} of a graph $G$ is a bijective map $\pi_{auto} : G \to G$ , such that the pair of nodes $u$ and $v$ forms an connection $(u, v)$ if and only if ($\pi_{auto}(u) , \pi_{auto}(v)$) also forms an edge. Automorphisms are also called permutation symmetries.
\end{enumerate}

Each symmetry, fibration, opfibration, covering, and automorphism,  gives rise to partitions called (respectively): fiber $\mathcal{C}^{\rm fib}$, opfiber $\mathcal{C}^{\rm op}$, cover $\mathcal{C}^{\rm cov}$, and orbit partitions $\mathcal{C}^{\rm auto}$, which are hierarchical in the sense of coarsening: $\mathcal{C}^{\rm fib} \ge \mathcal{C}^{\rm cov} \ge \mathcal{C}^{\rm auto}$
and $\mathcal{C}^{\rm op} \ge \mathcal{C}^{\rm cov} \ge \mathcal{C}^{\rm auto}$.

Automorphisms form symmetry groups. That is, the set of automorphisms of a graph satisfies the composition law, associativity, and has an inverse and identity. They are global symmetry transformations of the graph, as they apply a global permutation of all nodes that preserves the global adjacency matrix; that is, the connections between all nodes remain the same before and after the permutation. 

Two nodes are in an orbit iff:
\begin{equation} 
u \underset{\rm auto}{\sim} v  \iff \exists \, \pi_{auto}\in \text{Aut}(G) : \pi_{auto}(u) = v,
\end{equation}
where $\text{Aut}(G)$ is the symmetry group of $G$. The orbits form the orbital partition of the graph.

A partition $\mathcal{C}$ of a graph $G$ is \textit{finer} than (also called \textit{a refinement of}) a partition $\mathcal{C}'$ 
if every element of $\mathcal{C}$ is a subset of some element of $\mathcal{C}'$.
That is, $\mathcal{C}$ is a further fragmentation of $\mathcal{C}'$.
We denote this case as $\mathcal{C} \le \mathcal{C}'$.
In this example, $\mathcal{C}'$ is \textit{coarser} than  $\mathcal{C}$ ($\mathcal{C}' \ge \mathcal{C}$) and $\mathcal{C}'$ is a {\textit coarsening} (merger) of elements of $\mathcal{C}$.
The coarsest possible partition of a graph is one in which all nodes belong to a single partition. This is a graph with maximal symmetry. The finest possible partition is the (trivial) partition into singletons, which is the partition of a graph with no symmetry at all (or with only trivial symmetry).

As seen in Fig. \ref{fig:hierarchy}, for a directed graph, the fiber and the opfiber partitions are two different coarsenings of the cover partition, and the cover partition is a coarsening of the orbit partition. In principle, there is no relation (finer or coarser) between the fiber and the opfiber partition. However, when the graph is undirected, the fiber, opfiber, and cover partitions coincide. Yet, they are still a coarsening of the orbital partition.
Furthermore, as shown in Fig. \ref{fig:sync}b, the cluster synchronization partition (obtained dynamically by clustering activity synchronization in the forward pass, averaging over samples) is a coarsening of the fiber partition (but not of the opfiber partition).

All four partitions, fibers, opfibers, covers, and orbits, are different balanced colorings of the graph \cite{golubitsky2006nonlinear}:
fibers are in-balanced colorings, opfibers are out-balanced colorings, and covers are in- and out-balanced colorings. Orbits are also in- and out-balanced colorings. 
However, orbits have an extra condition compared to covers: two nodes in an orbit must be obtainable from one another through the application of a permutation symmetry (automorphism) of the graph.

For any partition, we will use the notation $c$ to indicate some color and $|c|$ the cardinality of the color, i.e., the number of nodes with the same color $c$.

Due to the refinement relations, the number of balanced colors increases from the (op)fiber partition to cover to orbit partition. This is reflected in Fig. \ref{fig:hierarchy} with 9 colors for fibers and 9 colors for opfibers as compared to 10 colors for covers and 12 colors for orbits (consider that the blank nodes in the figure have all different colors). This also reflects the increase in symmetry from automorphisms to covers to (op)fibers.

We show an example of an automorphism that maps 1 to 2, 2 to 1, 3 to 4, 4 to 3, 5 to 6, 6 to 5, and all other nodes map to themselves in  Fig.~\ref{fig:hierarchy}d. It is denoted in cycle notation by:
\begin{equation}
\pi = (1 \, 2) (3 \, 4) (5 \, 6).
\label{eq:legal_perm1}
\end{equation}
The other nodes are trivially permuted.

Fibrations, opfibrations, and coverings strictly generalize automorphisms by relaxing the global constraints to impose only a local preservation of structure, such as the input and/or output trees. Examples of these symmetries in a graph are shown in Fig.~\ref{fig:hierarchy}. 
These symmetries do not form groups; they form categories. The color-preserving isomorphisms between (output)input trees form groupoids (i.e., groups without a composition law) \cite{golubitsky2006nonlinear}. The fibration framework is therefore also known as 'the groupoid formalism' in the work of Golubitsky and  Stewart \cite{golubitsky2006nonlinear}.

Remark  1. One might wonder why coverings are not automorphisms, since both preserve inputs and outputs. The key distinction is scope: coverings preserve the input and output structure of the nodes within each cover, while automorphisms preserve the input and output structure globally across the entire graph. 

To exemplify the meaning of input and output trees and its relation with automorphism, we consider the graph $G$ in Fig.~\ref{fig:hierarchy}c and call the remaining nodes as: first hidden layer:  7 (top blank) and 8 (bottom blank), second hidden layer: 9 (top red), 10 (bottom red), third hidden layer: 11 (top blue), 12 (bottom blue). We also call 13 and 14 the top and bottom inputs, and 15 the graph's output.
Red nodes 9 and 10 form a cover since they have isomorphic input and output trees. That is, there is an input-isomorphism: 
$\tau_{9\to 10} = (9, 7, 13, 14) \rightarrow (10, 8, 13, 14)$ and an output-isomorphism: $\hat{\tau}_{9\to 10} = (9, 11, 12, 15) \rightarrow (10, 11, 12, 15)$ from the input and output tree of node 9 to 10. There is also an isomorphism between connections, but since all connections are the same (binary graph), there is no need to specify it.

One may wonder if nodes 9 and 10 might form an orbit, i.e., if they are also symmetric under a permutation symmetry. If we permute  9 and 10, then 7 and 8 need to be permuted as well to preserve the inputs of 9 and 10 (11 and 12 do not need to be permuted, as the outputs of 9 and 10 are preserved by the permutation). Thus, the permutation $\pi = (9 \, 10) (7 \, 8)$, in principle, preserves the adjacency of 9 and 10. However, node 8 is now connected to 3 and 4, while node 7 is not, and there is no other permutation that can restore the original adjacency (for instance, permuting 3 and 4 will not fix it). Thus, while $\pi$ is a valid permutation of the graph, it is not a permutation symmetry (automorphism) of the graph. The violation occurs not in the input or output trees of 9 and 10. In fact, one can check that the input and output trees of nodes 9 and 10 are still isomorphic in the permuted graph $\pi(G)$. The violation occurs in other parts of the graph: the inputs of nodes 3 and 4. This simple example illustrates the 'local' versus 'global' preservation condition for (op)fibrations/coverings versus automorphisms, an important ingredient of the framework presented here.

Remark 2. Note that 'local' gauge symmetries in physics have a different meaning \cite{georgi2000lie}. They refer to spacetime locality captured by the fiber bundle, not the fibration-theoretic locality. Fibrations are more general than fiber bundles since they only require relaxed fibers with local symmetry, rather than the symmetry groups required by the fibers in a fiber bundle. Fiber bundles are widespread in theoretical physics but not in biology or AI (see \cite{makse2026symmetry} for further details).

Remark 3. In our formulation, nodes with no inputs (for instance, all nodes in the input layer of an MLP) are assigned different balanced colors, ie, each node belongs to a different fiber. Same for the output layer and opfibers. This differs from the original definition of graph fibrations in \cite{boldi2002fibrations}.

Remark 4. When $G$ is a graph with flavored connections \cite{boldi2006graph}, the previous symmetry definitions remain valid using colored input and output trees. Two flavored trees ($T_u$ with flavor $q_u$ and $T_v$ with flavor $q_v$) are considered isomorphic if there exists a bijection $\tau: T_u \rightarrow T_v$ that is a one-to-one mapping of both nodes and connections from $T_u$ to $T_v$ and preserves flavors of the connections, i.e.  $\forall a\in A_{T_u}: q_u(a) = q_v \circ \tau(a)$. When all connections have the same flavor, the initial definitions are recovered as a special case.

\subsection{Lifting property by fibration symmetry: Construction of fibration base}
\label{material:lifting-prop}

So far, we have not provided a complete definition of a base
$B$ for any symmetry because Eqs. (\ref{eq:fibration}, \ref{eq:opfibration}, \ref{eq:covering}) only give us information about $N_B$ but not about the connections $A_B$. The connections for base fibrations and opfibrations are defined using \textit{lifting property}. In the case of fibration symmetry, this \textit{lifting property} is based on the structure of the in-neighborhoods of the graph $G$. The connections in the case of binary connections are selected as follows:

\begin{enumerate}
\item Calculate the partition $\mathcal{C}^{\rm fib}$ (see colors in Fig. \ref{fig:hierarchy}a).
\item Identify the in-neighborhood for all nodes of $G$.
\item Replace the node IDs in the in-neighborhoods with its color to obtain the in-neighborhoods of the nodes in the fibration base $B_{\rm fib}$.
\item Connect the nodes of $B_{\rm fib}$ using information from the neighborhoods. Note that each node in the base $B$ can receive more than one connection from another (see the base in Fig. \ref{fig:hierarchy}a). 
\end{enumerate}

Multiple connections indicate that the input is the addition of single-connection inputs. For networks with weighted connections, this addition is reflected in the sum of Eq.~(\ref{eq:fib-weight-compression}) derived in Methods~\ref{sec:fiber-compression-MLP}.

For opfibration symmetry, the lifting property is analogous but differs in that it is based on the structure of the out-neighborhoods of the graph $G$. The lifting property ensures that the input tree (resp., output) of a node in the fibration (opfibration) base is isomorphic to the trees of the nodes in that fiber (opfiber).

For a covering symmetry -- since it is simultaneously a fibration and an opfibration symmetry -- the base can be constructed using either the fibration or the opfibration lifting property after calculating the partition $\mathcal{C}^{\rm cov}$. Both cases are shown in Fig. \ref{fig:cover_symm_compression_two_ways}. In general, the results will differ. Therefore, one has to select either to follow a fibration compression (preserving the inputs, as we have done in Fig. \ref{fig:hierarchy}e) or an opfibration compression. 

In this work, we will use the construction of the cover base via the fibration lifting property, as it preserves the forward computation of the deep networks. This should not be confused with the proper fibration symmetry that compresses the fibers using the fibration map shown in Fig. \ref{fig:hierarchy}a.

\begin{figure}[ht!]
\centering
\includegraphics[scale=0.2]{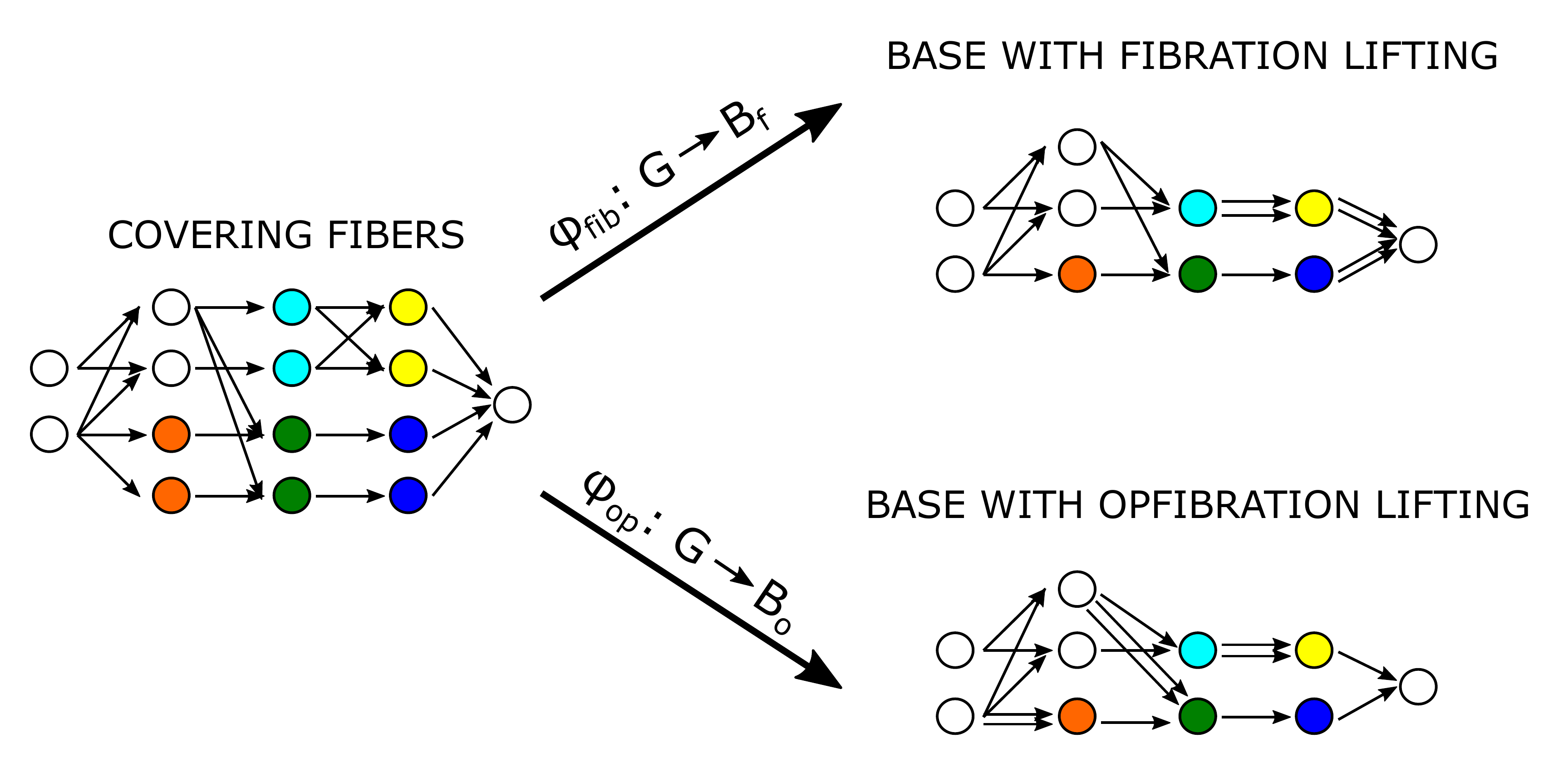}
\caption{\textbf{Compression to the cover base:} One can preserve the input tree or output tree, but generally not both unless all covers have the same cardinality which is not satisfied in this case, since 5 covers have cardinality 2 and 5 covers have cardinality 1 (trivial). Note that our notation uses white for trivial covers. A more strict representation should assign a different color to every one of the trivial covers, but this representation would be harder to grasp with so many colors.}
\label{fig:cover_symm_compression_two_ways}
\end{figure}

\subsection{Fibration lifting operation for weighted network}
\label{material:lifting}

Here, we elaborate on the concept of fibration lifting for a weighted network. The origin of the name ``lifting'' is the inverse operation of compression in Fig.\ref{fig:hierarchy}, where the base is ``lifted'' to a full graph. Since compression is not an injective function, the lifting is not unique. For example, the following weights $W$ (full graph) are transformed into the same weights $\Tilde{W}$ (base),
\begin{equation}
    W^{(\ell)}_{ij} = \frac{1}{|c(j)|} \hat{W}^{(\ell)}_{c'(i)c(j)} + \Delta_{ij}
\end{equation}
where $\sum_{\substack{k \in c'\\m \in c}} \Delta_{km} = 0$. 

In Fig. \ref{fig:lifting}, we compress using Eq. (\ref{eq:fib-weight-compression}) from $G_1$ to the base $B'_1$; then, $B'_1$ to the minimal base $B$. $B$ is a base of $G_1$ that has two fibers (blue and red). In $B'_1$, two nodes of the blue fiber are compressed; while in the minimal base $B$, the three nodes of the blue fiber are compressed. Two networks in different configurations can have the same base. For example, $G_1$ and $G_2$ can be compressed to $B'_1$, while $G_3$ and $G_4$ can be compressed to $B'_2$. In the end, these four graphs have the same minimal base $B$. In this work, we address the case of weighted graphs. With binary connection weights (representing simply the connection or its absence), the scenario is reduced to that described in \cite{boldi2002fibrations}.

All networks $G_1,G_2,G_3,G_4$, bases $B_1,B_2$, and the minimal base $B$ share the same forward computation.  That is, for every input $x$, the output $\hat{y}$ is identical across all networks. Moreover, for every pair $(x,y)$ in the dataset, the loss values are the same for all networks. This implies that the loss function $\mathcal{L}(w)$ is the same for all aforementioned networks (a degeneracy of the loss function). More precisely, if two networks $G$ and $G'$ have the same minimal base $B$, then $\mathcal{L}(w_G) = \mathcal{L}(w_{G'})$ where $w_G$ and $w_{G'}$ denote the parameters of $G$ y $G'$, resp. This result has important implications for the loss landscape as discussed in Methods \ref{sec:hierarchical} and Fig. \ref{fig:sync}d. 

\begin{figure}[ht]
    \centering
\includegraphics[scale=0.25]{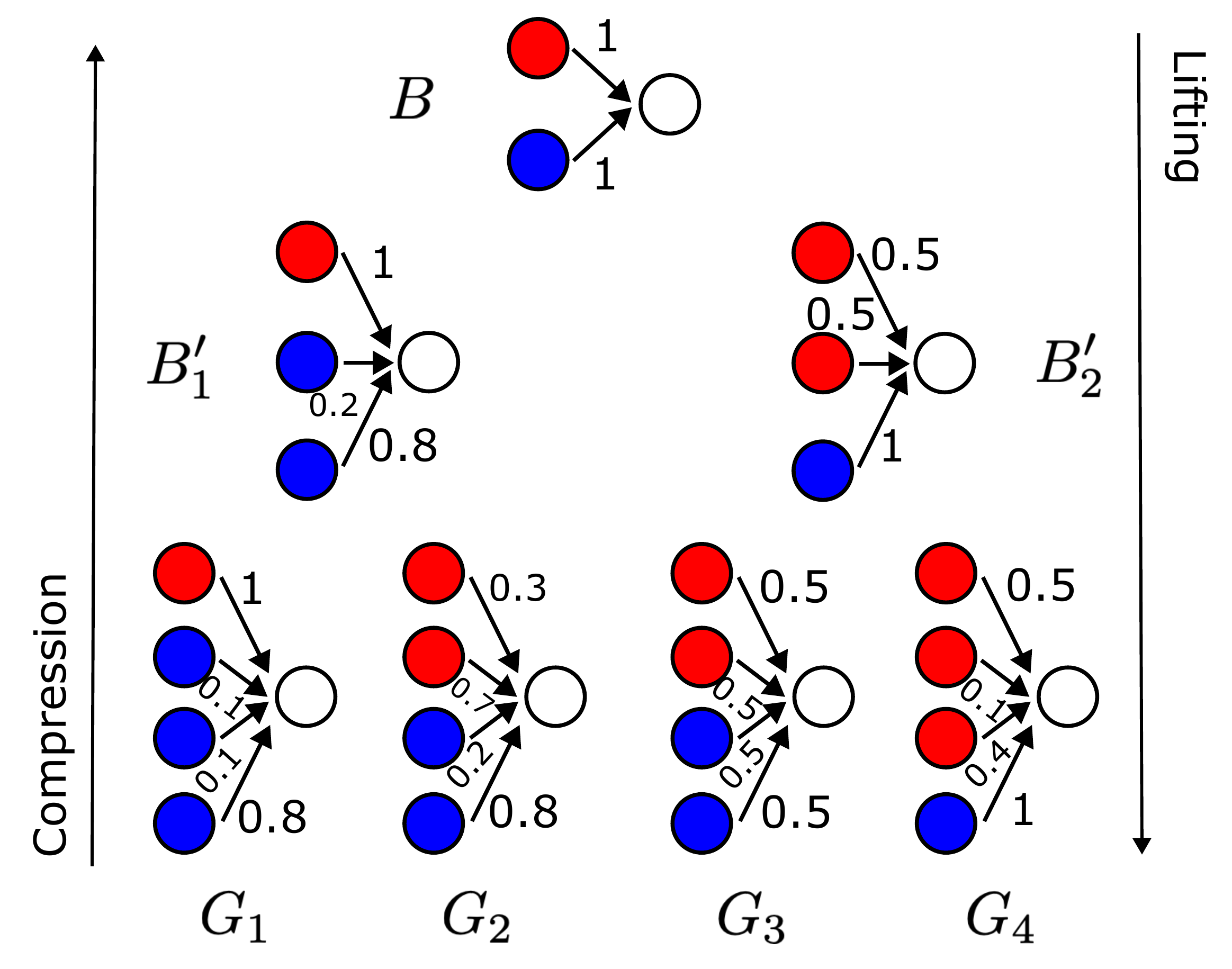}
    \caption{Examples of compression and lifting operations as reduction and expansion of dimensions, respectively. Colors represent fibers.}
    \label{fig:lifting}
\end{figure}

\subsection{Fibration symmetry implies activity synchronization}
\label{sec:fibration-symmetry-activity-synchronization}

Here we formally generalize the concept of fibration symmetry from binary weights to continuous weights. Two nodes $i$ and $j$ in layer $\ell$ belong to the same fiber (denoted $i\underset{\rm fib}{\sim}j$) if and only if (iff):

\begin{itemize}
    \item \textbf{Fibration symmetry:} 
    \begin{equation}
    i \underset{\rm fib}{\sim} j \iff \forall c \in \mathcal{C}^{\rm fib}_{\ell-1} : \quad \sum_{k \in c} W^{(\ell)}_{ik} = \sum_{k \in c} W^{(\ell)}_{jk}
\label{eq:fibration-definition}
\end{equation}
    
\end{itemize}
This criterion partitions the nodes in the layer $\ell$ into a
coloring $\mathcal{C}^{\rm fib}_\ell$ based on the input weights
$W^{(\ell)}_{ik}$ and the color partitioning of the previous layer
$\mathcal{C}^{\rm fib}_{\ell-1}$. In words, this definition states
that two nodes are in the same fiber iff for all colors from the
previous layer the sums of weights from these colors are the same. Or, more simply, the total input from each color is the same, which also ensures that their activity is the same. This recursive definition starts in the input layer, $\ell=0$, with all nodes ($d_0$ nodes) having distinct colors $\mathcal{C}^{\rm fib}_0 = \{1,2,...,d_0\}$, i.e., trivial fibers.  The definition of fibers in Eq. (\ref{eq:fibration-definition}) based on the sum of weights is the correct generalization from binary graphs (Fig. \ref{fig:hierarchy}) to weighted graphs, as we show in Methods \ref{material:symmetry-ffn}. 

Similarly to symmetry-induced invariance theorems in physics \cite{wilczek2016beautiful},  fibration symmetry induces activity synchronization during forward inference. This is the subject of the following theorem:

\begin{itemize}
    \item \textbf{Activity synchronization:}
     Given two nodes $i$ and $j$ in
layer $\ell$, the following hold in networks with inputs $x$:
    \begin{equation}
        i \underset{\rm fib}{\sim} j \implies h^{(\ell)}_i = h^{(\ell)}_j \qquad \text{for any network input }x
    \label{eq:synch-activity}
    \end{equation}
\end{itemize}

The proof of Eq. (\ref{eq:synch-activity}) is in Methods \ref{material:proof-synchronization}. The relation emerges naturally from the recursive dependence of neuronal activity $h_i^{(\ell)}$ on activity in the previous layer through input weights $W^{(\ell)}_{ik}$. Similar proofs exist for dynamical systems in
\cite{makse2026symmetry,deville2015modular,nijholt2016graph}.

Equation (\ref{eq:mlp-activity}) establishes the relationship between the network structure captured by its weights and the network function captured by its activity.  Therefore, the input tree structure governs the synchronization patterns.

\subsection{Input and output trees in weighted feedforward networks}
\label{material:symmetry-ffn}

In weighted feedforward networks (e.g. Multi-Layer Perceptrons), an automorphism symmetry between two nodes would require that they are in the same layer and that they have identical weight vectors at the input and output, so that permuting the two nodes keeps the entire network unchanged. Fibration and opfibration symmetries are less restrictive than automorphisms. As we shall see, they specify a weaker constraint on the weights. 

In order to define fibers in weighted layered graphs, we first discuss isomorphic input trees. The input tree $T_i$ of a node $i$ in layer $\ell$ consists of all paths from the input layer to node $i$. For example, for a node $i$ in the first layer ($\ell=1$), its input tree is the vector 
\begin{equation*}
    T^{(1)}_i = [W^{(1)}_{i1},W^{(1)}_{i2},...,W^{(1)}_{id_0}].
\end{equation*} 
Then, two nodes have isomorphic input trees if $T^{(1)}_i = T^{(1)}_j$. Using this criterion, we can define fibers in the first layer. For layer $\ell=2$, the input tree $T^{(2)}_i$ of node $i$ can be expressed as the concatenation of the input trees $T^{(1)}_m$ of nodes $m$ in layer $\ell=1$, and their corresponding connection weights $W^{(2)}_{im}$, i.e. 
\begin{equation*}
T^{(2)}_i = [T^{(1)}_1, W^{(1)}_{i1}, T^{(1)}_2, W^{(1)}_{i2}, ... ,T^{(1)}_{d_1}, W^{(1)}_{id_1}].
\end{equation*} 
If the fibers $c$ in the first layer are already known, $T^{(2)}_i$ can be represented more compactly as the concatenation of the input trees of the fibers $T^{(1)}_c$ (because they are all the same within a fiber), but we do have to aggregate the weights as $\sum_{k \in c} W_{ic}$ to replicate their sum of effects in layer $\ell=1$. Therefore, two nodes $i$ and $j$ of the layer $\ell=2$ have isomorphic input trees, $T^{(2)}_i = T^{(2)}_j$,  iff $\forall c: \sum_{k \in c} W_{jk} = \sum_{k \in c} W_{ik}$. This criterion therefore defines the fiber for layer $\ell=2$. This argument can be extended to the next layers $\ell=3,...,N$ thus recursively defining fibers.  This leads to the equal-sum criterion in the definition (\ref{eq:fibration-definition}) of fiber in weighted graphs. Similar formulations have been used in the context of dynamical system synchronization \cite{aguiar2017patterns,leifer2022symmetry}. 

In order to define flavored opfibers, we will now discuss isomorphic flavored output trees. (They are referred to as colored weights in \cite{boldi2006graph}, but here we use the word ``flavor" to not confuse with node colors).  For a node $i$ in layer $\ell=N-1$, its output tree is the concatenation of the output weights $w$ of $i$ with its corresponding flavors $q(w)$; i.e., 
\begin{equation*}
 \hat{T}^{(N-1)}_i = [(W^{(N)}_{1i}, q(W^{(N)}_{1i})),(W^{(N)}_{2i},q(W^{(N)}_{2i})),...,(W^{(N)}_{d_Ni},q(W^{(N)}_{d_Ni}))]   \, .
\end{equation*}
We are interested in the case where the flavors of the connections $W_{ki}$ depend only on $i$ (i.e. $q(W_{ki}) = q(i)$). That means 
\begin{equation*}
\begin{split}
 \hat{T}^{(N-1)}_i &= [(W^{(N)}_{1i}, q(i)),(W^{(N)}_{2i},q(i)),...,(W^{(N)}_{d_Ni},q(i))] = \\
 &=([W^{(N)}_{1i}, W^{(N)},...,W^{(N)}_{d_Ni}],q(i)) \,\, .   
\end{split}
\end{equation*}
Then, two nodes have isomorphic flavored output trees if $q(i) = q(j)$ and $\forall k: W^{(N)}_{ki} = W^{(N)}_{kj}$. Using this criterion, we can define flavored opfibers in layer $\ell=N-1$. For layer $\ell=N-2$, the output tree $\hat{T}^{(N-2)}_i$ of node $i$ can be expressed as the concatenation of the output trees $\hat{T}^{(N-1)}_m$ of nodes $m$ in layer $\ell=N-1$, and their corresponding connection weights $W^{(N-1)}_{mi}$, i.e., 
\begin{equation*}
\hat{T}^{(N-2)}_i = ([\hat{T}^{(N-1)}_1, W^{(N-1)}_{1i}, \hat{T}^{(N-1)}_2, W^{(N-1)}_{2i}, ... ,\hat{T}^{(N-1)}_{d_{N-1}}, W^{(N-1)}_{d_{N-1}i}],q(i)).   
\end{equation*}
If the opfibers $\hat{c}$ in layer $\ell=N-1$ are already known, $\hat{T}^{(N-2)}_i$ can be more compactly represented as the concatenation of the output trees of the opfibers $\hat{T}^{(N-1)}_{\hat{c}}$ (because they are all the same within an opfiber), but we do have to aggregate the weights $\sum_{k \in \hat{c}} W_{ki}$ to replicate their summed effect from layer $\ell=N-1$. Then, two nodes $i$ and $j$ of the layer $\ell=N-2$ have isomorphic output trees iff $\forall \hat{c}: \sum_{k \in \hat{c}} W_{kj} = \sum_{k \in \hat{c}} W_{ki}$ and $q(i)=q(j)$ to preserve flavors. This argument can be extended to the next layers $\ell=N-3,..., 1$. This leads to the equal-sum criterion in definition (\ref{eq:opfibration-definition}) of opfiber in weighted graphs along with the flavor-preserving condition.

To capture the coupling of forward pass and backpropagation of Eq. \ref{eq:mlp-backprop}, we flavor the weights with the coloring of the fibration symmetries (i.e., when $q(i)=c(i) \in \mathcal{C}^{\rm fib}$). This ensures that nodes in a flavored opfiber have the same $\sigma'$, not just in the current layer but for the entire tree propagating the error from the output to each node of the network. When the network is linear, $q=1$ for all connections and flavored opfibers simplify to regular opfibers, uncoupling forward from backward coloring. 

\subsection{Proof of synchronization in Eqs. (\ref{eq:synch-activity}) and (\ref{eq:synch-error})}
\label{material:proof-synchronization}

We will prove both equations by induction. Suppose that the synchronization of activity holds for the layer $\ell-1$. Let $i, j$ be two nodes in layer $\ell$ such as $i \underset{\rm fib}{\sim} j$, then: 

\begin{equation*}
\begin{split}
h_i^{(\ell)} &= \sigma \left( \sum_k W^{(\ell)}_{ik} h_k^{(\ell-1)} \right) \\
&= \sigma \left( \sum_{c \in \mathcal{C}^{\rm fib}_{\ell-1}} \sum_{k \in c} W^{(\ell)}_{ik} h_k^{(\ell-1)} \right) \\
&= \sigma \left( \sum_{c \in \mathcal{C}^{\rm fib}_{\ell-1}} h_c^{(\ell-1)} \sum_{k \in c} W^{(\ell)}_{ik} \right) \qquad \text{activity synchronization in $\ell-1$}\\
&= \sigma \left( \sum_{c \in \mathcal{C}^{\rm fib}_{\ell-1}} h_c^{(\ell-1)} \sum_{k \in c} W^{(\ell)}_{jk} \right) \qquad i \underset{\rm fib}{\sim} j \\
&= h_j^{(\ell)}.
\end{split}
\end{equation*}

Therefore, the synchronization holds for the layer $\ell$. Now, let us prove the base case of the induction ($\ell=1$).

\begin{equation*}
   i \underset{\rm fib}{\sim} j \implies W^{(1)}_{ik} = W^{(1)}_{jk} \implies \sigma \left( \sum_k W^{(1)}_{ik} x_k \right) = \sigma \left( \sum_k W^{(1)}_{jk} x_k \right) \implies h_i^{(1)} = h_j^{(1)}.
\end{equation*}

With these two steps, we have proven Eq. (\ref{eq:synch-activity}). In Methods \ref{material:iff-synchronization}, we show that this relationship is invertible in the case of linear activiation functions. In that case, if we observe synchronization of two nodes for all inputs, then we can infer that they are in a fiber. 

Now, let's prove Eq. (\ref{eq:synch-error}), which is based on the backpropagation rule--Eq. (\ref{eq:mlp-backprop}). First, note that the non-linear case of colored opfibration: $i \underset{\rm op}{\sim} j \implies i \underset{\rm fib}{\sim} j \implies \sigma'_i = \sigma'_j$. In the linear case, $\forall i: \sigma'_i = 1$, but $i, j$ are not necessarily in the same fiber. Suppose that the error is synchronized in layer $\ell+1$. Let $i, j$ be two nodes in layer $\ell$ such as $i \underset{\rm op}{\sim} j$. 

\begin{equation*}
\begin{split}
    \delta_i^{(\ell)} &= \sigma'^{(\ell)}_i \sum_k W^{(\ell+1)}_{ki}  \delta_k^{(\ell+1)}\\
    &= \sigma'^{(\ell)}_i \sum_{c \in \mathcal{C}^{\rm op}_{\ell+1}} \sum_{k \in c} W^{(\ell+1)}_{ki}  \delta_k^{(\ell+1)}\\
    &= \sigma'^{(\ell)}_i \sum_{c \in \mathcal{C}^{\rm op}_{\ell+1}} \delta_c^{(\ell+1)} \sum_{k \in c} W^{(\ell+1)}_{ki} \qquad \text{error synchronization in $\ell+1$} \\
    &= \sigma'^{(\ell)}_j \sum_{c \in \mathcal{C}^{\rm op}_{\ell+1}} \delta_c^{(\ell+1)} \sum_{k \in c} W^{(\ell+1)}_{kj}  \qquad i \underset{\rm op}{\sim} j \\
    &= \delta_j^{(\ell)}
\end{split}
\end{equation*}

Then, the error synchronization holds for layer $\ell$. Now, let's prove the base case of the induction ($\ell=N-1$).

\begin{equation*}
\begin{split}
   i \underset{\rm op}{\sim} j &\implies W^{(N)}_{ki} = W^{(N)}_{kj} \quad \& \quad \sigma'^{(N-1)}_i = \sigma'^{(N-1)}_j \\
   & \implies \sum_k W^{(N)}_{ki} \delta_k = \sum_k W^{(N)}_{kj} \delta_k \quad \& \quad \sigma'^{(N-1)}_i = \sigma'^{(N-1)}_j\\
   & \implies \delta_i^{(N-1)} = \delta_j^{(N-1)}.    
\end{split}
\end{equation*}

We have proven Eq. \ref{eq:synch-error}.

\subsection{Quasi-synchronization of activity and error in quasi-symmetries}
\label{material:quasi-synchronization}

Although synchronization theorems are not valid for quasi-symmetries, an upper bound for the variation of activity and error within quasi-fibers and quasi-opfibers can be found. If $\sigma$ is a Lipschitz function with coefficient $K_\sigma$, two nodes $i, j$ in the same quasi-fiber with threshold $\varepsilon_{\rm fib}$, we obtain:

\begin{eqnarray*}
    |h^{(\ell)}_{i} - h^{(\ell)}_{j}| &=& |\sigma \left( \sum_k W^{(\ell)}_{ik} h_k^{(\ell-1)} \right) - \sigma \left( \sum_k W^{(\ell)}_{jk} h_k^{(\ell-1)} \right)| \\
    &\leq& K_\sigma |\sum_k (W^{(\ell)}_{ik}-W^{(\ell)}_{jk}) h_k^{(\ell-1)}| \\
    &\approx& K_\sigma |\sum_{c, k\in c} (W^{(\ell)}_{ik}-W^{(\ell)}_{jk}) \hat{h}_{c}^{(\ell-1)}| \quad \text{for small $\varepsilon_{\rm fib}$ for layer $\ell-1$} \\
    &\leq& K_\sigma \sum_{c} |\hat{h}_{c}^{(\ell-1)}| |\sum_{k\in c}(W^{(\ell)}_{ik}-W^{(\ell)}_{jk})| \\
    &\leq& K_\sigma |C^{\rm fib}_{\ell-1}| \max_{c} |\hat{h}_{c}^{(\ell-1)}| \varepsilon_{\rm fib} 
\end{eqnarray*}
Here the approximation reflects the fact that in the input layer $\ell-1$ activations are approximately the mean value over the activations in the same quasi-fiber $c$: $\hat{h}_c^{(\ell-1)} \approx \frac{1}{|c|} \sum_{k \in c} h_k^{(\ell-1)}$, i.e. a quasi-fiber condition on the preceding layer.
    
For two nodes $i, j$ in the same quasi-opfiber with threshold $\varepsilon_{\rm op}$, we obtain:
\begin{eqnarray*}
    |\delta^{(\ell)}_{i} - \delta^{(\ell)}_{j}| &=& |\sigma'^{(\ell)}_i \left( \sum_k W^{(\ell)}_{ki} \delta_k^{(\ell+1)} \right) - \sigma'^{(\ell)}_j \left( \sum_k W^{(\ell)}_{kj} \delta_k^{(\ell+1)} \right)| \\
    &\approx& |\hat{\sigma}'^{(\ell)}_c| | \sum_k \left( W^{(\ell)}_{ki} -W^{(\ell)}_{kj} \right) \delta_k^{(\ell+1)}| \quad \text{for small $\varepsilon_{\rm fib}$ for layer $\ell$}  \\
    &\approx& |\hat{\sigma}'^{(\ell)}_c| | \hat{\delta}^{(\ell+1)}_{c'} | | \sum_c \sum_{k \in c}\left( W^{(\ell)}_{ki} -W^{(\ell)}_{kj} \right) | \quad \text{for small $\varepsilon_{\rm op}$ for layer $\ell+1$}  \\
    &\leq& K_\sigma |C^{\rm op}_{\ell+1}| \max_{c'} |\hat{\delta}^{(\ell+1)}_{c'}| \varepsilon_{\rm op}  \\
\end{eqnarray*}
Here the same approximation is used, replacing the error in layer $\ell+1$ with their mean value in a quasi-opfiber, $\hat{\delta}_{c}^{(\ell+1)}$, i.e. an quasi-opfiber condition for the subsequent layer. We also used the fact that $K_\sigma$ is an upper bound for $|\sigma'|$.

\subsection{Equivalence between fibration and synchronization for linear activation}
\label{material:iff-synchronization}

Here we show that synchronization of activity for all inputs implies fibration symmetry, provided node activations are linear, i.e. two nodes that are synchronized are in the same fiber. Because in the linear case, the error backpropagation is entirely analogous to the forward propagation of activity, we also conclude that error synchronization implies opfibration: two nodes with synchronized errors are in the same opfiber. Therefore, structure not only defines function, but conversely, function defines structure. This is an instance of the structure-function relation, which here appears only in the linear case where the forward pass is decoupled from the backward pass.

We will prove this by induction. First, for $\ell=1$:
\begin{equation*}
\begin{split}
\forall x: h^{(1)}_i = h^{(1)}_j &\implies \forall x: [W^{(1)}x]_i = [W^{(1)}x]_j \\
&\implies W^{(1)}_{ik} = W^{(1)}_{jk}\\
&\implies i \underset{\rm fib}{\sim} j.
\end{split}
\end{equation*}

Now, suppose that this is valid for $\ell-1$, then:
\begin{equation*}
\begin{split}
\forall x: h^{(\ell)}_i = h^{(\ell)}_j &\implies \forall x: \sum_{k}W_{ik}^{(\ell)}h^{(\ell-1)}_k = \sum_{k} W^{(\ell)}_{jk}h^{(\ell-1)}_k \\
&\implies \forall x: \sum_{c \in \mathcal{C}^{\rm fib}_\ell} \hat{h}^{(\ell-1)}_c \sum_{k \in c}W_{ik}^{(\ell)} = \sum_{c \in \mathcal{C}^{\rm fib}_\ell} \hat{h}^{(\ell-1)}_c \sum_{k \in c} W^{(\ell)}_{jk} \qquad \text{Hyp of induction} \\
&\implies \sum_{k \in c} W^{(\ell)}_{ik} = \sum_{k \in c} W^{(\ell)}_{jk} \qquad \text{because is valid for all }\hat{h}_c\\
&\implies i \underset{\rm fib}{\sim} j.
\end{split}
\end{equation*}

For opfibration symmetry and error synchronization, we have an analogous proof.

\subsection{Flavored opfibration implies error synchronization}
\label{sec:opfibration-symmetry-error-synchronization}

The backpropagation of errors in Eq. (\ref{eq:mlp-backprop}) exhibits greater complexity due to the intrinsic coupling between error gradients and forward activations mediated by the slope of the nonlinear activation $\sigma'^{(\ell)}$. Specifically, the error $\delta_i^{(\ell)}$ for node $i$ depends not only on the error signals from the next layer (mediated by the output weights $W^{(\ell)}_{ik}$) but also on the activation value $h_i^{(\ell)}$ at that node, which introduces a nonlinear coupling between $\delta_i^{(\ell)}$ and $h_i^{(\ell)}$, absent in forward propagation.

This coupling can be systematically absorbed into an output tree that governs error propagation analogously to how the input tree governs activity propagation. The key observation is that the slope --- which is a function of $h_i^{(\ell)}$ --- couples the in-balanced color assignments from the forward fiber structure (see Eq. (\ref{eq:fibration-definition})) to the connections that define the error flow in Eq. (\ref{eq:mlp-backprop}).

We formalize this by introducing \textit{connection flavors} into the output tree, where the flavors are inherited from the node colors of the forward computation. We take advantage of the concept of \textit{flavor-preserving} opfibration symmetry \cite{boldi2006graph}, which preserves flavor assignments of connections. Therefore, two nodes belong to the same flavored opfiber if:

\begin{itemize}
    \item \textbf{Flavor-preserving opfibration symmetry:}
    \begin{equation}
    i \underset{\rm op}{\sim} j \implies  \quad \forall \hat{c} \in \mathcal{C}^{\rm op}_{\ell+1}: \quad \sum_{k \in \hat{c}} W^{(\ell+1)}_{ki}= \sum_{k \in \hat{c}} W^{(\ell+1)}_{kj} \quad \& \quad q(i) = q(j) 
    \label{eq:opfibration-definition}
    \end{equation}
\end{itemize}
\noindent with flavors given by
\[ 
q(i)= \left\{
\begin{array}{ll}
      1 \qquad \qquad \qquad \text{linear activation}\\
      c(i) \in \mathcal{C}^{\rm fib}_{\ell} \qquad \text{non-linear activation}
\end{array} 
\right. 
\]

Equation (\ref{eq:opfibration-definition}) defines the opfiber partition $\mathcal{C}^{\rm op}_{\ell}$. In the linear activation case ($\sigma'$ = 1), all weights have the same flavor ($q=1$), and the forward and backward coloring decouple and can be treated independently.  In the non-linear case, the flavors must first be calculated using the fiber coloring $\mathcal{C}^{\rm fib}_{\ell}$ computed forward through the entire network as above. Then, the sum-of-weights condition of Eq. (\ref{eq:opfibration-definition}) is evaluated for each flavor separately, recursively starting at the output of the network, with all nodes in different opfibers $\mathcal{C}^{\rm op}_N = \{1,2,...,d_N\}$. 

Using this definition, we arrive at the following theorem. The proof
is given in Methods \ref{material:proof-synchronization}.

\begin{itemize}
    \item \textbf{Errors synchronization: }
     Given two nodes $i$ and $j$ in layer $\ell$, the following hold in a network with inputs $x$ and targets $y$:
    \begin{equation}
        i \underset{\rm op}{\sim} j \implies \delta^{(\ell)}_i = \delta^{(\ell)}_j \qquad \text{for any network input $x$, target $y$}\, . 
    \label{eq:synch-error}
    \end{equation} 
\end{itemize}

We note that for linear activations ($\sigma'$ = 1), the relationship between synchronization and (op)fibration becomes bidirectional ($\iff$) in both Eqs. (\ref{eq:synch-activity}) and (\ref{eq:synch-error}, as we prove in Methods \ref{material:iff-synchronization}. This means that synchronized nodes necessarily form (op)fibers, and conversely, nodes in a (op)fiber will synchronize. This represents a strict one-to-one structure-function equivalence that is difficult to demonstrate for more general non-linear dynamical systems \cite{park2013structural}. 
 
Alternative definitions for opfibration that capture the coupling with the forward pass are treated in Methods \ref{si:alternative-defs}. In the theory of gases, the linear case is analogous to non-interacting particles in an ideal gas, whereas the nonlinear case corresponds to particle interactions in a real gas, and flavor plays the role of a compression factor.

\subsubsection{Alternative definitions to capture forward-backward coupling in nonlinear Feedforward Networks}
\label{si:alternative-defs}

There are a few alternatives to our approach of using flavor-preserving opfibrations. The proof of the synchrony theorems and stability rules can be rewritten for these alternative definitions of opfibers and covers, with minor modifications. Instead of requiring flavors, one can directly require that the sum equality hold, including the slopes for all inputs $x$

\begin{itemize}
    \item \textbf{Option 1: Hybrid opfiber definition}
    \begin{equation*}
    i \underset{\rm op}{\sim} j \iff  \forall x, \forall \hat{c} \in \mathcal{C}^{\rm op}_{\ell+1}: \quad \sigma'^{(\ell)}_i \sum_{k \in \hat{c}} W^{(\ell+1)}_{ki}= \sigma'^{(\ell)}_j  \sum_{k \in \hat{c}} W^{(\ell+1)}_{kj} \, .
    \end{equation*}
\end{itemize}

These conditions are satisfied if the sum of the weight conditions is met and the slopes are synchronized between all possible inputs $x$. In the case of linear activations $\sigma'=1$, the definition simplifies to a sum constraint on the output weight alone. 

The proofs of the stability rules and the theorem actually only require error synchronization, so we can alternatively make that explicit in the definition of a cover: 

\begin{itemize}
    \item \textbf{Option 2: Hybrid cover definition}
    \begin{equation*}
    i \underset{\rm cov}{\sim} j \iff  i \underset{\rm fib}{\sim} j \quad \& \quad \forall x: \delta^{(\ell)}_i = \delta^{(\ell)}_j \, ,
    \end{equation*}
\end{itemize}
and require a covering symmetry, where we currently require a flavor-preserving opfibration. This alternative formulation of the stability rules and the cover coarsening theorem holds for all cover definitions in this work.  Indeed, note that our definition of a flavored opfibration implies a fiber in the current layer. Therefore, it automatically satisfies the cover condition \ref{eq:cover-definition}. In other words, the flavor-preserving opfibration as we have defined it happens to be a cover. In the linear case, the error synchronization condition can be relaxed to the (unflavored) opfibration condition.   

The two alternatives above swap the structural condition based on the weights, with the explicit but weaker requirement of activity/error synchronization (hence the label ``Hybrid''). Indeed, the proofs can be derived entirely from activity and error synchronization. However, we can also use a purely structural definition based solely on weights, without requiring flavored trees or synchronization, as follows.

\begin{itemize}
    \item \textbf{Option 3: Recursive cover definition}
    \begin{equation*}
    i \underset{\rm cov}{\sim} j \iff  \quad 
    i \underset{\rm fib}{\sim} j \quad \& \quad 
    \forall \hat{c} \in \mathcal{C}^{\rm cov}_{\ell+1}: \quad  \sum_{k \in \hat{c}} W^{(\ell+1)}_{ki} = \sum_{k \in \hat{c}} W^{(\ell+1)}_{kj} \, .
    \end{equation*}
\end{itemize}

This last definition leverages the knowledge that the structural requirement of fibers ensures activity synchronization and, therefore, slope synchronization $\sigma'^{(\ell)}$. It essentially couples the definition of an opfiber with the definition of the cover, which is decoupled in the linear case. 

The first two alternatives above do not have an obvious coloring algorithm based solely on weights, since the definitions depend on the activity. The last definition has the same coloring algorithm as for the flavored opfibration: One has to perform a forward coloring based on fibers, and use this to establish the cover coloring starting at the output, where in each layer, the cover is an intersection of the fibers partition with the partition due to the weight constraints.

\subsubsection{Gradient Descent Rule - Eq. (\ref{weight-update})}
\label{material:gd}

For a dataset $\mathcal{D} = \{(x,y)\}$, we denote the network's input as $h^{(0)} = x$ or $\mathbf{H}^{(0)} = x$. The output of the network is a prediction $\hat{y}$ to the desired $y$. The error of this prediction is $L = \mathcal{L}(\hat{y},y)$, where $\mathcal{L}$ is called \textit{loss function}. The weights $W^{(\ell)}$ and $\mathbf{W}^{(\ell)}$ can then be adjusted based on corrections that minimize the error $L$. Using gradient descent, the change in the weight matrix/tensor is

\begin{equation*}
    W^{(\ell)}(t) = W^{(\ell)}(t-1) - \alpha \frac{\partial L}{\partial W^{(\ell)}}(t-1)
\end{equation*}
where $\alpha$ is called the learning rate. Calculating the derivative we find:

\begin{align*}
    \text{MLP}: & \frac{\partial L}{\partial W^{(\ell)}} = \delta^{(\ell)} \cdot \left( h^{(\ell-1)} \right)^T, \\
    \text{CNN}: & \frac{\partial L}{\partial \mathbf{W}^{(\ell)}} = \delta^{(\ell)}  \ast \mathbf{H}^{(\ell-1)}
\end{align*}   
where $\delta^{\ell}$ is the error signal for the nodes in layer $\ell$.

\subsubsection{Proof of Synchronized Learning - Eq. (\ref{eq:synch-learning})} 
\label{material:proof-synchron}

Consider two nodes $i$ and $j$ in the layer $\ell-1$ in the same fiber and two nodes $m$ and $n$ in the layer $\ell$ in the same opfiber (see red and green nodes in Fig. \ref{fig:theoretical_results}a). Using Eqs. (\ref{eq:synch-activity})-(\ref{eq:synch-error}), we obtain $h^{(\ell-1)}_i = h^{(\ell-1)}_j$ and $\delta^{(\ell)}_m = \delta^{(\ell)}_n$. 

Then,
\begin{equation*}
\begin{split}
\Delta W^{(\ell)}_{mi} = -\alpha \frac{\partial L}{\partial W_{mi}^{(\ell)}} &= -\alpha \delta^{(\ell)}_m h^{(\ell-1)}_i \\
&= -\alpha \delta^{(\ell)}_m h^{(\ell-1)}_j = \Delta W^{(\ell)}_{mj}\\
&= -\alpha \delta^{(\ell)}_n h^{(\ell-1)}_i = \Delta W^{(\ell)}_{ni}\\
&= -\alpha \delta^{(\ell)}_n h^{(\ell-1)}_j = \Delta W^{(\ell)}_{nj}   
\end{split}
\end{equation*}

\subsection{Covering symmetries emerge during stochastic gradient descent}
\label{sec:cover-theorem}

The final symmetry that we introduce in DNNs is the covering symmetry. Two nodes belong to the same cover iff they are in the same fiber and
the same flavored opfiber:

\begin{itemize}
    \item \textbf{Covering symmetry:}
    \begin{equation}
    \label{eq:cover-definition}
    i \underset{\rm cov}{\sim} j \iff i \underset{\rm fib}{\sim} j \quad \& \quad    i \underset{\rm op}{\sim} j   \, .
    \end{equation}
\end{itemize}

This condition defines a covering partition of the network
$\mathcal{C}^{\rm cov}_{\ell}$, which is preserved under GD. Specifically, as a consequence of \textit{synchronized learning}, once two nodes are in a single covering at learning time step $t$, the weight update with the gradient descent algorithm keeps the nodes in the covering at time $t+1$. This is formalized in the {\em Cover Coarse-Graining Theorem} \ref{theorem:covering} (proof in Methods \ref{material:covering-theorem-proof})
. 
We proof this theoretical result in Methods \ref{material:covering-theorem-proof}. The theorem implies that once multiple nodes are in the same cover, they cannot exit that cover, and thus constitute an invariant set. However, distinct covers can merge to form larger (coarser) covers. The proof of this Cover Coarse-Graining Theorem is based on stability rules (see Methods \ref{material:covering-theorem-proof}). These provide an intuition for how fibration and opfibration symmetries develop during learning. The stability rules state that the fibers in the layer $\ell$ preserve the fibers in the layer $\ell+1$, while the opfibers preserve the opfibers in the layer $\ell-1$, as shown in Fig.~\ref{fig:theoretical_results}b. Thus, opfiber formation tends to propagate backward from the output layer, while fiber formation tends to propagate forward from the input as learning progresses. Note that for the non-linear case, the opfibers have a flavoring of the connections to capture their dependence on the forward pass. This introduces an asymmetry in the stability rules, reflected in Fig~\ref{fig:theoretical_results}b, with connection colors defined at the output, but not at the input.

\subsection{Proof of Theorem \ref{theorem:covering}: Cover Coarse-Graining Theorem}
\label{material:covering-theorem-proof}

First we will proof stability rules for fibers and opfibers, which are a direct consequence of synchronized learning under gradient descent. Then we use these rules for a recursive proof of the Cover Coarse-Graining Theorem. Then we briefly discuss stability of covers under stochastic gradient descent. 

\subsubsection{Opfiber, Fiber and Cover Stability Rules} 

Here we will proof three stability rules used to proof the cover coarsening theorem. The opfiber stability rule states that nodes in the same cover (blue nodes in Fig. \ref{fig:theoretical_results}b) at learning time $t$ will belong to the same opfiber after one GD learning step $t+1$, if their output nodes are on the same opfiber. The fiber stability rule states that nodes in the same cover at time $t$ will belong to the same fiber the next time $t+1$, if their input nodes are on the same fiber. Both rules are combined in the cover stability rule, which states that nodes in the same covering at time $t$ remain in the same covering in the subsequent time $t+1$.

During the training process, the network weights $W$, the symmetries, and partitions of the nodes $\mathcal{C}$ will depend on the time $t$.

During gradient descent training of an FNN, \textit{learning
  synchronization} ensures that:
\begin{enumerate}
    \item Nodes in the same cover $i
      \underset{cov,\ell,t}{\sim} j$ will belong to the same opfiber at the next learning times step $i\underset{op,\ell,t+1}{\sim}j$ if
      $\mathcal{C}^{\rm op}_{\ell+1}(t+1)$ is coarser than
      $\mathcal{C}^{\rm op}_{\ell+1}(t)$.
    \item Nodes in the same cover $i
      \underset{cov,\ell,t}{\sim}j$ will belong to the same fiber at the
      next learning time step $i\underset{fib,\ell,t+1}{\sim}j$ if
      $\mathcal{C}^{\rm fib}_{\ell-1}(t+1)$ is coarser than
      $\mathcal{C}^{\rm fib}_{\ell-1}(t)$.
    \item Nodes in the same cover $i
      \underset{cov,\ell, t}{\sim} j$ will belong to the same cover at the next learning time step $i\underset{cov,\ell,t+1}{\sim}j$ if $\mathcal{C}^{\rm fib}_{\ell-1}(t+1)$ is coarser than  $\mathcal{C}^{\rm fib}_{\ell-1}(t)$ and $\mathcal{C}^{\rm op}_{\ell+1}(t+1)$ is coarser than $\mathcal{C}^{\rm op}_{\ell+1}(t)$.   
\end{enumerate}

\textbf{Proof} (1) Let $i \underset{cov,\ell,t}{\sim} j$ be nodes in
layer $\ell$. For learning time step $t+1$, for $\forall \hat{c} \in
\mathcal{C}^{\rm op}_{\ell+1}(t+1)$:

\begin{equation*}
\begin{split}
    \sum_{k \in \hat{c}} W^{(\ell+1)}_{ki}(t+1) &= \sum_{k \in \hat{c}} W^{(\ell+1)}_{ki}(t) + \Delta W^{(\ell+1)}_{ki}(t) \\
    &= \sum_{k \in \hat{c}} W^{(\ell+1)}_{ki}(t) + \Delta W^{(\ell+1)}_{kj}(t).
\end{split}
\end{equation*}

The last equality is valid due to $i \underset{fib,\ell,t}{\sim} j$ and activity synchronization. Now, note that

\begin{equation*}
\begin{split}
    \sum_{k \in \hat{c}} W^{(\ell+1)}_{ki}(t) &= \sum_{r \in \mathcal{C}^{\rm op}_{\ell+1}(t)} \sum_{k \in \hat{c} \cap r} W^{(\ell+1)}_{ki}(t) \\
    & = \sum_{r \in \mathcal{C}^{\rm op}_{\ell+1}(t)} \Theta(r \subset \hat{c}) \sum_{k \in r} W^{(\ell+1)}_{ki}(t) \qquad \textit{Hyp) $\mathcal{C}^{\rm op}_{\ell+1}(t+1)$ is coarser than $\mathcal{C}^{\rm op}_{\ell+1}(t)$} \\
    & = \sum_{r \in \mathcal{C}^{\rm op}_{\ell+1}(t)} \Theta(r \subset \hat{c}) \sum_{k \in r} W^{(\ell+1)}_{kj}(t) \qquad \textit{Hyp) $i \underset{op,\ell,t}{\sim} j$}\\
    & = \sum_{k \in \hat{c}} W^{(\ell+1)}_{kj}(t).        
\end{split}
\end{equation*}

Then, we obtain $\sum_{k \in \hat{c}} W^{(\ell+1)}_{ki}(t+1) = \sum_{k \in \hat{c}} W^{(\ell+1)}_{kj}(t+1)$. That means $i \underset{op,\ell,t+1}{\sim} j$.

The proof of (2) is similar. 
The proof of (3) is the consequence of (1) and (2).

\subsubsection{Recursive proof of cover coarsening theorem}
The cover coarsening theorem indicates nodes in the same cover $i \underset{cov,\ell, t}{\sim} j$ will belong to the same cover at the next time $i\underset{cov,\ell,t+1}{\sim}j$. The theorem relaxes the conditions of the cover stability rule.

For $\ell=0$ and $\ell=N$, $\mathcal{C}^{\rm fib}_0(t) = \{[1],[2],...,[d_0]\}$, and $\mathcal{C}^{\rm op}_N(t) = \{[1],[2],...,[d_N]\}$ that remain constant over time $t$. 

We apply fiber stability rule for layer $\ell=1$. Nodes in the same cover $i \underset{cov,\ell=1,t}{\sim}j$ will belong to the same fiber at the next time $i\underset{fib,\ell=1,t+1}{\sim}j$ because $\mathcal{C}^{\rm fib}_{0}(t+1)$ is coarser than $\mathcal{C}^{\rm fib}_{0}(t)$. This means $\mathcal{C}^{\rm fib}_{1}(t+1)$ is coarser than $\mathcal{C}^{\rm fib}_{1}(t)$.

With the same argument, we apply the fiber stability rule for layer $\ell=2$, and we obtain $\mathcal{C}^{\rm fib}_{2}(t+1)$ is coarser than $\mathcal{C}^{\rm fib}_{2}(t)$. 

By iterating over all layers, we obtain $\forall \ell: i \underset{cov,\ell, t}{\sim} j \implies i \underset{fib,\ell, t+1}{\sim} j$.

Applying the same reasoning with the opfiber stability rule from layer $\ell=N$, then $\forall \ell: i \underset{cov,\ell, t}{\sim} j \implies i \underset{op,\ell, t+1}{\sim} j$.

Using both results, we obtain: $\forall \ell: i \underset{cov,\ell, t}{\sim} j \implies i \underset{cov,\ell, t+1}{\sim} j$.

The nodes in the same cover at learning time step $t$,  will belong to the same cover in the next time step. Equivalently, 
$\mathcal{C}^{\rm cov}_{\ell}(t+1)$ is coarser than  $\mathcal{C}^{\rm cov}_{\ell}(t)$.  

\subsubsection{Stability of covers}

Theorem \ref{theorem:covering} can be interpreted as follows: \textit{If network parameters $\theta_t$ at time $t$ satisfy $i \underset{\rm cov}{\sim}j$; then $i \underset{\rm cov}{\sim} j$ still holds for  $\theta_{t+1}$}. That is, the parameter set satisfying $i \underset{\rm cov}{\sim} j$ is invariant under the weight update in GD. 

Chen \textit{et al.} \cite{chen2023stochastic} have shown that any invariant parameter set of GD is a stable attractor in the SGD algorithm, provided that there is a sufficiently large learning constant. The intuition is that stochastic fluctuations transverse to a cover are damped, drawing the parameters toward the symmetric set, while fluctuations along it vanish due to synchronized learning. As a result, the cover theorem implies that the covering symmetries are stable attractors of SGD. 

\subsection{Fibers coloring for sums, convolutions, gates and concatenations}
\label{material:hypergraphs}

We have discussed fibers for the canonical MLP architecture. here we explain how they apply to the most common architectural motives found in modern deep networks. Many of these, such as sums and gating, can be understood as hypergraphs, and therefore we are defining fibration rules here for hypergraphs. 

\begin{itemize}
    \item {\bf Sums:}
    The most ubiquitous motive are nodes that are added together. They can be a simple as the sum of a bias term, ${\bf W} {\bf h} + {\bf b}$ or the sum of two sets of inputs with their corresponding weights, ${\bf W}_h {\bf h} + {\bf W}_x {\bf x}$, or residual connections, ${\bf x}+{\bf W}_h {\bf h}$. They can be simply treated by concatenating weights and inputs. For example: ${\bf W}_h {\bf h} + {\bf W}_x {\bf x} = [{\bf W}_h | {\bf W}_x] [{\bf h}^T | {\bf x}^T]^T = \widehat{\bf W} \widehat{\bf h}$. The fiber equations are then simply applied to $\widehat{\bf W}$, and the nodes in inputs ${\bf x}$ and ${\bf h}$ contribute their own colors. In the case of residual connections, an identity matrix is used when concatenating weights. The orthogonality of this identity makes the formation of fibers unlikely unless they were already in the input ${\bf x}$. In other words, residual connections will tend to break any fibers that would otherwise have formed in a layer. 
    
    \item {\bf Convolutions:} Most prominent in CNNs, convolutions, $\mathbf{W} * \mathbf{H}$, where a matrix of "kernels" of filters $\mathbf{W}$ mixes feature channels $\mathbf{H}$, while each kernel mixes time samples, pixels, or tokens, depending on the application.  We can simply ignore the time/pixel/token dimensions and treat each filter as a scalar weights. For instance, the fibration condition is simply:
    
    \begin{equation*}
    \sum_{k \in c}  \hat{\mathbf{W}}[:,:,i,k] =  \sum_{k \in c} \mathbf{W}^{(\ell)}[:,:,j,k]. 
    \end{equation*}

    See example in Fig.~\ref{fig:compression_architec}a. The usual pooling operates only along the time/pixel/toke dimensions and does not affect colors only given to feature channels.  

    \item {\bf Concatenations:}
    When two vectors of nodes are concatenated, then each contributes their own color, similar to the way sums where handled above. 
        
    \item {\bf Gates:}
    In RNNs such as the LSTM it is common to use gates of the form, ${\bf f} \odot {\bf x}$, where in this case an input vector ${\bf x}$ is multiplied element-wise with a forget gate ${\bf f}$. This represents a multiplicative two-node interaction. For a fiber to emerge in this product, two nodes must be a fiber in the corresponding nodes of ${\bf x}$ \textit{and} in a fiber of nodes of ${\bf f}$, as exemplified with the colors in Fig.~\ref{fig:compression_architec}b. Two nodes in the product have the same colors, if they had the same colors in both factors --- this color combination rule is the same \textit{and} operation as we used for covers, but here it applies to the factors of the product. A similar element-wise product occurs in transformers when the key and query vectors are multiplied, ${\bf k} \odot {\bf q}$. The coloring can be treated the same as for gates, but the two structures differ for the fibration compression rule as shown in Fig.~\ref{fig:compression_architec}c and explained in Methods~\ref{sec:fiber-compression-hypergraph}. As with convolutional networks, the toke dimensions are not colored, and thus the attention matrix itself, square in the number of tokens, does not affect colors.    
\end{itemize}

Identifying symmetries in the time/pixel/token dimensions will be left to future work. 

\begin{figure}
    \centering
    \includegraphics[width=0.8\linewidth]{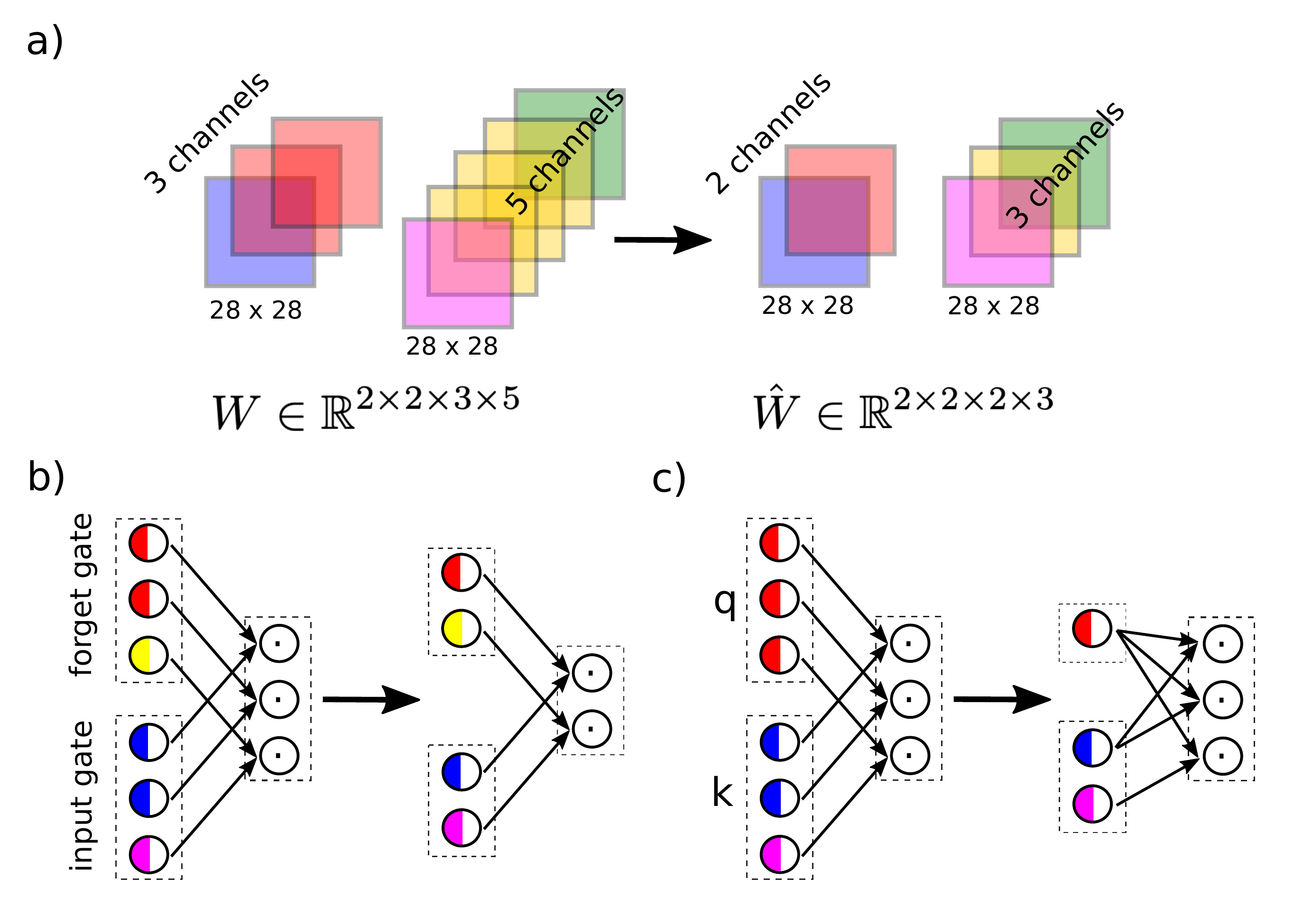}
    \caption{\textbf{(a)} Example of fibration compression of a convolution layer. It reduces the number of channels (3 to 2 and 5 to 3).  \textbf{(b)} Example of fibration compression of gates in LSTM. The base has an identical number of nodes across all gates (here 2 nodes per gate). \textbf{(c)} The fibration compression for attention modules   preserves the count of interactions (3 multiplications) even when the number of fibers of the $k$ and $q$ differ.}
    \label{fig:compression_architec}
\end{figure}

\subsection{Fibration compression rules}
\label{material:fiber-compression}

Now we want to derive the fibration compression rule for the parameters of a base network $B_{\rm fib}$, so that the activity of the nodes remains unchanged, i.e., the base network maintains forward computation of the original network $G$. We will do this first for the MLP, and then discuss compression for other architectures. 

\subsubsection{Fibration compression for a feedforward network}
\label{sec:fiber-compression-MLP}

Let $\hat{h}_{c'}$ be the activity of a fiber $c' \in \mathcal{C}^{\rm fib}_\ell$, then it follows that:
\begin{equation}
\begin{split}
\hat{h}^{(\ell)}_{c'} & = h^{(\ell)}_i \qquad \forall i \in c' \qquad \text{(activity synchronization)}\\
 & = \frac{1}{|c'|} \sum_{i \in c'} h^{(\ell)}_i \qquad \text{only $\approx$  for $\varepsilon_{\rm fib}>0$}\\
 & = \frac{1}{|c'|} \sum_{i \in c'} \left[ W^{(\ell)}_{ik} h^{(\ell)}_k + b^{(\ell)}_{i} \right]\\
 & = \frac{1}{|c'|} \sum_{i \in c'} \sum_{c \in \mathcal{C}^{\rm fib}_{\ell-1}} \sum_{k \in c} W^{(\ell)}_{ik} h^{(\ell)}_k + \frac{1}{|c'|} \sum_{i \in c'} b^{(\ell)}_{i}\\ 
 & = \sum_{c \in \mathcal{C}^{\rm fib}_{\ell-1}} \left[ \frac{1}{|c'|} \sum_{i \in c'}  \sum_{k \in c} W^{(\ell)}_{ik} \right]  \hat{h}^{(\ell)}_c + \left[ \frac{1}{|c'|}  \sum_{i \in c'} b^{(\ell)}_{i} \right]\\ 
 & = \sum_{c \in \mathcal{C}^{\rm fib}_{\ell-1}} \hat{W}_{c'c} \hat{h}^{(\ell)}_c + \hat{b}^{(\ell)}_{c} \, ,\\ 
\end{split}
\end{equation}
where$\hat{W}_{c'c}^{(\ell)} = \frac{1}{|c'|} \sum_{i \in c'}  \sum_{k \in c} W^{(\ell)}_{ik}$ and $\hat{b}^{(\ell)}_{c'} = \frac{1}{|c'|} \sum_{i \in c'} b^{(\ell)}_{i}$. These matrices $\hat{W}_{c'c} $ and vectors $\hat{b}^{(\ell)}_{c'}$ are the parameters of the base $B_{\rm fib}$. They are obtained by adding the weights with the same input colors and averaging the weights with the same output colors (Fig. \ref{fig:theoretical_results}c)

A similar equation can be derived for a base network $B_{\rm op}$ whose nodes are the opfibers of the original network $G$ and the backward computation of the error is preserved. 

Then compression to a covering base can be chosen to preserve the input or output trees. This is represented pictorially for the same covering base in a network with binary weights in Fig~\ref{fig:cover_symm_compression_two_ways}. Compressing a graph into its covering base where both forward and backward computations are preserved is only possible when all covers have the same cardinality $|c'|$. This is too strong of a requirement and has not been explored in more detail here. 

\subsubsection{Fibration compression for a convolutional layer in an CNN}
\label{sec:fiber-compression-CNN}

A similar line of reasoning works for CNNs where the nodes of the computational graph are the channels in the convolutions. The definition of fibers and lifting for MLPs (Eqs. \ref{eq:fibration-definition}, \ref{eq:fib-weight-compression}) can be generalized to CNNs. These equations are applied to the channels $i,k$ in a layer $\ell$; where $W^{(\ell)}_{ik}$ is a $D^2$-dimensional vector that represents the kernel between the channels $i$ and $j$ (see Fig. \ref{fig:theoretical_results}a).

More precisely,
\begin{equation}
\begin{split}
\hat{\mathbf{W}}[:,:,c,c'] &= \frac{1}{|c'|} \sum_{i \in c'}  \sum_{k \in c} \mathbf{W}^{(\ell)}[:,:,k,i], \\
\hat{\mathbf{b}}^{(\ell)}_{c'} &= \sum_{i \in c'} \mathbf{b}^{(\ell)}_{i}.
\end{split}
\end{equation}

\subsubsection{Fibration compression for an Hypergraph}
\label{sec:fiber-compression-hypergraph}

Let us generalize the concepts of fibrations and lifting to any computational hypergraphs, where two or more nodes interact before acting on the subsequent node. The most common interaction is a multiplicative modulation or ``gating” of activity. 

Suppose that there are two triplets of nodes $(p,i,j)$ and $(p',i',j')$ such as $h_p = h_i . h_n$ and $h_{p'} = h_{i'} . h_{n'}$ where $h$ is the activity of the node. We will say $p \sim p'$ if and only if $i\sim i'$ and $j \sim j'$. Note $p \sim p' \implies h_p = h_{p'}$.

Many times, this modulation appears between node layers $z = x \odot y$, where $x,y,z \in \mathbb{R}^d$ (product between forget and input gate in LSTM or product between key and query in the attention mechanism). The activity in $z$ is $z_i = x_i y_i = \hat{x}_c \hat{y}_{c'}$ when $i$ is in the fiber $c$ of the layer $x$ and $i$ is in the fiber $c'$ of the layer $y$.

\begin{itemize}
    \item \textbf{LSTM}. An important property of fibration in the gates in an LSTM is that they have the same fiber distribution (see the forget gate and the input gate in Fig. \ref{fig:theoretical_results}b). This implies that all layers ($x,y,z \in \mathbb{R}^{d}$) will be compressed to the same number ($\hat{x},\hat{y},\hat{z} \in \mathbb{R}^{d'}$) and the dynamics is $\hat{z}_c = \hat{x}_c\hat{y}_c$ where $c$ is a fiber. Therefore, if nodes are in the same fiber for the forget gate (see red nodes in Fig.~\ref{fig:theoretical_results}b), their corresponding nodes in the input gate (see blue nodes), output gate, cell input and hidden state, will also be in the same fiber.  Consequently, the base has an identical number of nodes across all gates (2 nodes per gate in Fig.~\ref{fig:theoretical_results}b).
    \item \textbf{Attention Mechanism}. Within an attention module of Transformers, the number of fibers may vary for the query (q), key (k), and value (V) projections (see Fig.~\ref{fig:theoretical_results}c). Then we can reduce the number of nodes in $q$ and $k$ from $d$ to $d'_q$ and $d'_k$, respectively; but we cannot reduce the number of nodes in $z = q \odot k$, i.e. fibration compression preserves the number of interactions (see Fig. \ref{fig:theoretical_results}c).
\end{itemize}

\subsection{Datasets and tasks used in empirical tests}
\label{sec:experiments_cfg}

Table~\ref{tab:training-configs} summarizes the architecture and training hyperparameters for  experimental setups: MNIST and ImageNet classification. Ten independent training runs were conducted for each experiment. The average training accuracy over epochs is shown in Figs. \ref{fig:results-formation-and-pruning}a and c.

\begin{table}[htbp]
\centering
\caption{Training configurations for MLP-MNIST and CNN-ImageNet experiments.}
\label{tab:training-configs}
\begin{tabular}{lll}
\toprule
\textbf{Parameter} & \textbf{MLP (MNIST)} & \textbf{CNN (ImageNet)} \\
\midrule
\multicolumn{3}{l}{\textit{Architecture}} \\
\quad Input & 28$\times$28 Binary & 32$\times$32 RGB \\
\quad Hidden layers & 3 $\times$ 500 nodes (dense) & Conv(32, 5$\times$5), Conv(64, 3$\times$3), \\
                   &                              & Dense(128), Dense(128) \\
\quad Output & 10 nodes & 2 nodes \\
\midrule
\multicolumn{3}{l}{\textit{Dataset}} \\
\quad Classes & 10 digits (0--9) & 1{,}000 classes \\
\quad Training set & 60K images & 600K images \\
\quad Test set & 10Kimages & 100K images \\
\midrule
\multicolumn{3}{l}{\textit{Training}} \\
\quad Loss & Cross Entropy & Cross Entropy \\
\quad Optimizer & Adam & SGD \\
\quad Learning rate & 0.001 & 0.001 \\
\quad Momentum & -- & 0.9 \\
\quad Batch size & 100 & 100 \\
\quad Epochs & 600 & 300 \\
\quad Independent runs & 10 & 10 \\
\bottomrule
\end{tabular}
\end{table}

Table~\ref{table:RL} lists the hyperparameters used to train the Proximal Policy Optimization (PPO) agent: a deep neural network consisting of a sequential stack of three Conv2d layers, a linear layer, and a LSTM.
The agent runs across 65 parallel environments to maximize sample diversity,
accumulating experience in batches of 30{,}720 transitions before each
gradient update. A short BPTT horizon of 2 is adopted to limit the
computational cost of backpropagating through the LSTM while still allowing
the recurrent state to capture short-term temporal dependencies. 

Symmetry formation during LSTM training with Proximal Policy Optimization (PPO) is not monotonic. We attribute this to the entropy term in the PPO, which encourages random exploration and prevents premature collapse of the policy toward deterministic actions \cite{schulman2017proximal}. We hypothesize that the entropy term acts as a symmetry-regulating mechanism whose effect depends on the concentration of the policy. Early in training, when the policy is nearly uniform, the curvature of the entropy term is positive, reinforcing existing quasi-covers and promoting redundancy across nodes. As the policy focuses on a smaller set of actions, this curvature changes sign, destabilizing symmetric configurations, and allowing nodes to specialize. This dual role provides a mechanistic account of the non-monotonic evolution of quasi-covers observed in the black curve in Fig. \ref{fig:results-formation-and-pruning}e, where the proportion of quasi-covers first decreases, then gradually increases as specialization breaks them. 

\begin{table}[htbp]
\centering
\caption{Training hyperparameters for the PPO agent on Atari Beam Rider.}
\label{table:RL}
\begin{tabular}{ll}
\toprule
\textbf{Hyperparameter} & \textbf{Value} \\
\midrule
\multicolumn{2}{l}{\textit{Environment}} \\
\quad Environment & Atari Beam Rider \\
\quad Number of environments & 65 \\
\midrule
\multicolumn{2}{l}{\textit{Training}} \\
\quad Total agent timesteps & 9{,}216K \\
\quad Batch size & 30{,}720 \\
\quad Minibatch size & 1{,}024 \\
\quad Learning rate & $4\times10^{-4}$ \\
\quad BPTT horizon & 2 \\
\bottomrule
\end{tabular}
\end{table}

Table~\ref{table:transformer} details the architecture and training
configuration of the sequence-to-sequence transformer. The model uses a
symmetric encoder--decoder stack of three layers each, with eight attention
heads per layer. The asymmetry between source (10{,}837 tokens) and target
(19{,}214 tokens) vocabulary sizes reflects the morphological richness of
German relative to English. Training is regularized via early stopping with
a patience of 20 epochs. 

\begin{table}[htbp]
\centering
\caption{Architecture and training hyperparameters for the sequence-to-sequence
transformer on the Multi30k German-to-English translation task.}
\label{table:transformer}
\begin{tabular}{ll}
\toprule
\textbf{Parameter} & \textbf{Value} \\
\midrule
\multicolumn{2}{l}{\textit{Architecture}} \\
\quad Number of encoder layers & 3 \\
\quad Number of decoder layers & 3 \\
\quad Number of attention heads & 8 \\
\quad Source vocabulary size & 10{,}837 \\
\quad Target vocabulary size & 19{,}214 \\
\midrule
\multicolumn{2}{l}{\textit{Training}} \\
\quad Batch size & 128 \\
\quad Learning rate & $1\times10^{-4}$ \\
\quad Adam $\beta_1$ & 0.9 \\
\quad Adam $\beta_2$ & 0.98 \\
\quad Early stopping patience & 20 epochs \\
\bottomrule
\end{tabular}
\end{table}

\subsection{Refinement algorithm for balanced colorings in feedforward networks}
\label{material:refinement-coloring}

A {\it coloring} $c$ of a graph $G$ is a map $c: N_G \rightarrow
\mathcal{C}$ where $c(u)$ is called the color of the node $u$ and
$\mathcal{C}$ is called the set of \textit{colors}. The coloring $c$
is \textit{in-balanced} if $c(u) = c(v)$ implies that $T_u$ and $T_v$
are color isomorphic. The minimal in-balanced coloring is a
in-balanced coloring of a graph with the minimal number of colors. In
terms of synchronization, nodes inside the same subset of the
in-balanced coloring partition (i.e. nodes with the same color) can
synchronize its activities, since they receive the same color inputs
from the same synchronized nodes.

A refinement algorithm of $G$ starts with an initial coloring $c_0$ and produces a new coloring $c_t$ in each iteration $t$, based on some criteria, until a desired property is achieved (e.g. in-balanced coloring).

In a graph without weighted connections and without labeled nodes, the method can be summarized as follows.

\begin{enumerate}
\item $c_0$ assigns a trivial color to each vertex $v$ (e.g. $c_0(v)$=1). 
\item $c_{i+1}(v)=\left(c_{i}(v),\{\{c_{i}(w)\mid w{\text{ is a neighbor of }}v\}\}\right)$
\end{enumerate}

At some point, it stabilizes $c_{t+1}(u)=c_{t}(u)$ $\forall u \in N_G$, $t>T$. 
We designed a new method for a layered feedforward network. First, let us reformulate Eq. (\ref{eq:fibration-definition})

\begin{equation}
\begin{split}
    i \underset{\rm fib}{\sim} j \iff \forall r: \sum_{k| c(k)=r} W^{(\ell)}_{jk} = \sum_{k| c(k)=r} W^{(\ell)}_{ik} &\iff \forall r: \hat{W}^{(\ell)}_{ir} = \hat{W}^{(\ell)}_{jr}  \\
    &\iff \hat{W}^{(\ell)}_{i,:} = \hat{W}^{(\ell)}_{j,:} \\
    &\iff d(\hat{W}^{(\ell)}_{i,:}, \hat{W}^{(\ell)}_{j,:}) = 0 \\
    &\iff 1-\hat{W}^{(\ell)}_{i,:} . \hat{W}^{(\ell)}_{j,:} = 0 \qquad \text{if $\hat{W}$ is L2-normalized}
\end{split}
\label{eq:distance-fibration}
\end{equation}
where $\hat{W}^{(\ell)}_{ir} = \sum_{k| c(k)=r} W^{(\ell)}_{ik}$. 

Given that these conditions are impossible to satisfy exactly for continuous weights, we transform Eq. (\ref{eq:distance-fibration}) into a set of inequality constraints parameterized by a tolerance $\varepsilon$. More precisely, we define the distance matrix $D^{(\ell)}_W = 1 - \hat{W}^{(\ell)} \hat{W}^{(\ell)T}$ and we use it to calculate the colors/clusters ($i \underset{\rm fib}{\sim} j$) via agglomerative clustering with distance threshold $\varepsilon$: 

\begin{enumerate}
\item $c^{(\ell=0)} \in \mathcal{C}^{\rm fib}_{0}$ assigns unique colors to each of the input features ($\ell=0$).
\item For $\ell=1,...,N$:

\begin{enumerate}[label*=\arabic*.]
    \item Calculation of $\hat{W}^{(\ell)}_{ir} = \sum_{k| c(k)=r} W^{(\ell)}_{ik}$ $\forall r \in  \mathcal{C}^{(\ell)}$
    \item Normalization of $\hat{W}^{(\ell)}$.
    \item Calculation of the matrix $D^{(\ell)}_W = I - \hat{W}^{(\ell)} \hat{W}^{(\ell)T}$ 
    \item Agglomerative Clustering of the nodes in $\ell$ based on the distance matrix $D^{(\ell)}_W$ (see Alg. \ref{alg:agglomerative}). We use a distance threshold $\varepsilon \in (0,2)$.
    \item Fibers $c^{(\ell)} \in \mathcal{C}^{\rm fib}_{\ell}$ are defined as the clusters. 
\end{enumerate}
\end{enumerate}

\begin{algorithm}
\caption{Agglomerative Clustering}\label{alg:agglomerative}
\begin{algorithmic}[1]

\State Initialize clusters $\mathcal{C}={{1},{2},...,{n}}$ 

\While{$|\mathcal{C}| > 1$}
    \State $\forall c,c' \in \mathcal{C}: d_{\rm complete}(c, c') = \underset{i \in c, j \in c'}{\max} D_W(i,j)$
    \State $d_{min} = \underset{c,c'\in \mathcal{C}}{\min} d_{\rm complete}(c, c')$
    \State If $d_{min}>\varepsilon$: stop
    \State $\mathcal{C} \gets$ Merge the clusters whose distance is less than $d_{min}$.
\EndWhile
\State \Return $\mathcal{C}$

\end{algorithmic}
\end{algorithm}

Note that agglomerative clustering employs a distance threshold $\varepsilon$ to determine cluster merging (see Alg. \ref{alg:agglomerative}). According to Eq. (\ref{eq:distance-fibration}), this is equivalent to treating the equalities $\sum_{k \in c} W^{(\ell)}_{ik} = \sum_{k \in c} W^{(\ell)}_{jk}$ with a tolerance parameter $\varepsilon$, which defines the tolerance to a weak fibration breaking (see Step 2.4). In other words, all approximate fibration symmetries should be regarded as quasi-fibrations \cite{boldi2022quasifibrations} or quasi-balanced colorings \cite{leifer2022symmetry}.

The coloring generated in the feedforward pass generates $\mathcal{C}^{\rm fib}_{\ell}$ for all layers. The same algorithm run backwards starting at the output generates $\mathcal{C}^{\rm op}_{\ell}$. The number of operations to execute this coloring algorithm in a layered feed-forward graph scales linearly with the parameter size in each layer and linearly with the number of layers $N$. 

For recurrent networks, the fibration and opfibration definitions apply similarly, except that partitioning $\mathcal{C}^{\rm fib}_{\ell}$ and $\mathcal{C}^{\rm op}_{\ell}$ must be iterated recursively forward and backward, respectively, until convergence. Convergence requires, at most, the times of the longest loop in the recurrence \cite{norris1995universal}. The code is available at \url{github.com/MakseLab/fibrations_in_dnns}.

\subsection{Optimal quasi-cover thresholds for a given loss threshold} 
\label{si:loss-threshold-search}

One can use the same $\varepsilon$ for all layers, or select layer-specific quasi-fiber and quasi-opfiber thresholds. To find the optimal $\varepsilon$ efficiently, it is helpful to have an efficient estimate of this $\Delta L$. We estimate the loss using gradients $G_{ik} = \frac{\partial L}{\partial W_{ik}}$ that have already been computed during training. The weight update $\Delta {\bf W}$ by compression changes the loss by approximately:
\begin{equation*}
\Delta L \approx \sum_{i,k} \frac{\partial L}{\partial W_{ik}} \Delta W_{ik} = \text{tr}({\bf G}^{T} \Delta {\bf W}).
\end{equation*}
Note that $\Delta W$ depends on $\varepsilon$, while $G$ does not, since it is computed before compression. The compression of the entire network, each layer $\ell$ contributes a term $\Delta L^{(\ell)}$; i.e.,
\begin{equation}
    \Delta L(\vec{\varepsilon}) = \sum_{l} \Delta L^{(\ell)}(\vec{\varepsilon}) ,
    \label{eq:total-loss-change}
\end{equation}
where $\vec{\varepsilon} = (\varepsilon^{(1)}_{\rm fib},\varepsilon^{(2)}_{\rm fib},...,\varepsilon^{(N)}_{\rm fib}, \varepsilon^{(1)}_{\rm op},\varepsilon^{(2)}_{\rm op},...,\varepsilon^{(N)}_{\rm op})$. On the other hand, for each $\vec{\varepsilon}$, the collapsed network has $P(\vec{\varepsilon})$ parameters. $P$, as a function of each dimension of $\vec{\varepsilon}$, is decreasing. We define a threshold for the variation of the loss function $\varepsilon_{Loss}$ and want to solve the following constrained optimization problem:
\begin{equation}
    \min_{\vec{\varepsilon}} P(\vec{\varepsilon}) \quad \text{with } \quad \Delta L(\vec{\varepsilon}) \leq \varepsilon_{\rm Loss}. 
    \label{eq:optimization_equation}
\end{equation}

With this formulation, we seek the maximum possible compression, provided that the collapsed network performs similarly to the original network (in terms of loss function). $\varepsilon_{\text{Loss}}$ must be defined a priori as a fixed value. Unlike $\varepsilon$, which we limit to $\le 1$ in the search by normalizing the weight matrix in the coloring algorithm (see Eq. (\ref{eq:distance-fibration}), $\varepsilon_{\text{Loss}}$ is not confined to a bounded domain. A natural question is how to determine a reasonable value for $\varepsilon_{\text{Loss}}$. Each network is designed to perform a specific task, whose performance is measured by a metric $A$ (e.g., accuracy). Although there is no exact functional relationship between $A$ and $L$, it is generally expected that $A$ increases as $L$ decreases. Moreover, beyond a certain point during training, $A$ stops increasing and stabilizes, while $L$ continues to decrease (see Fig.~\ref{fig:metrics_MLP_CNN}). The value of $L$ beyond which further reductions in loss no longer produce improvements in $A$ defines our $\varepsilon_{\text{Loss}}$ (dotted line in Fig.~\ref{fig:metrics_MLP_CNN}). Note that both the $A$ vs. $L$ relationship and the value of $\varepsilon_{Loss}$ depend strongly on the architecture and the task to be performed.

\begin{figure}
    \centering
    \includegraphics[width=0.7\linewidth]{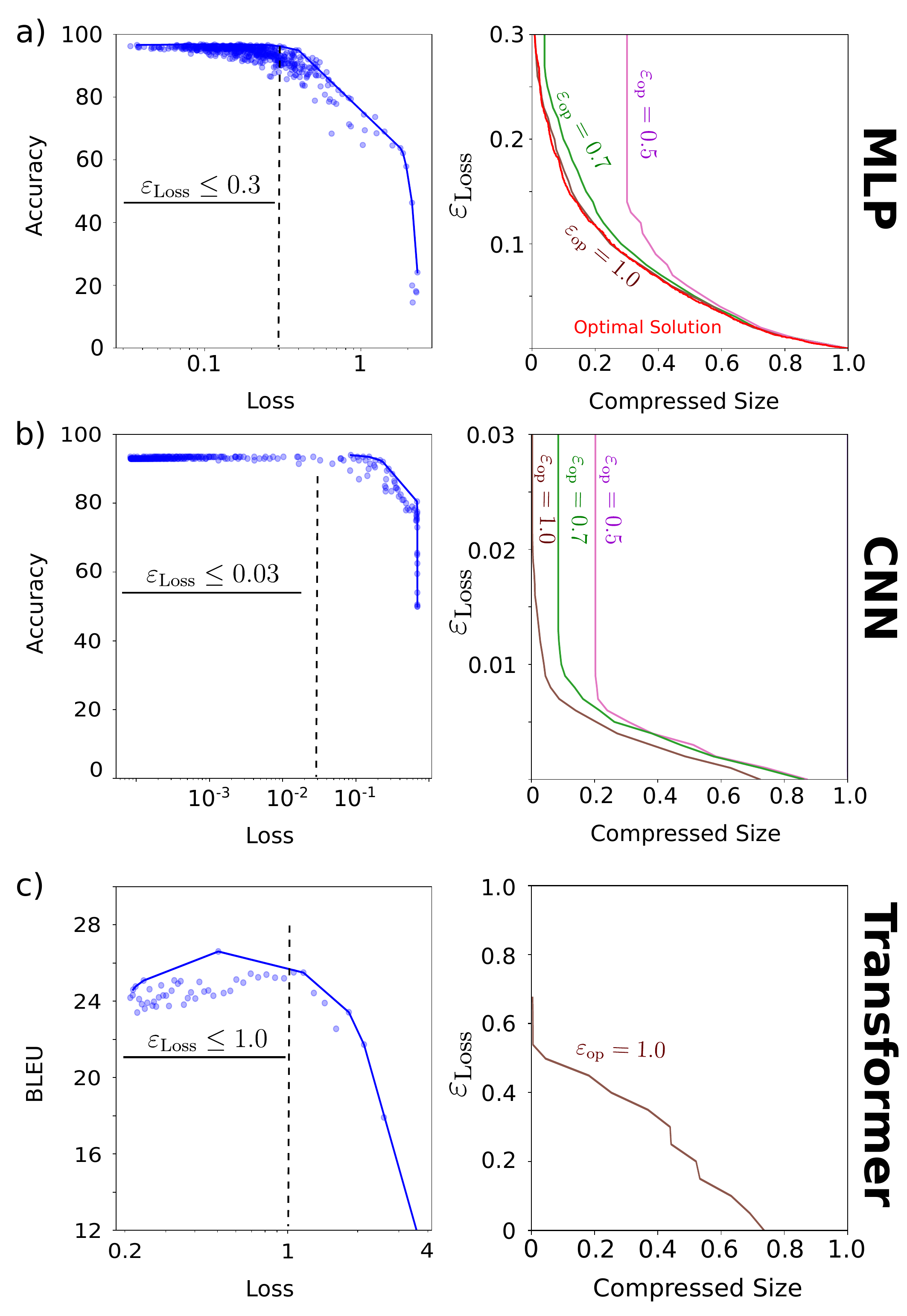}
    \caption{(Left panel) Relationship between accuracy and loss function. There exists an interval of the loss function where the network does not lose performance in terms of accuracy. This interval defines the acceptable $\varepsilon_{\rm Loss}$. (Right panel) For those acceptable $\varepsilon_{\rm Loss}$, we determine the optimal $\varepsilon_{\rm fib}$ and $\varepsilon_{\rm op}$ using various search method. Thus, each $\varepsilon_{\rm Loss}$ determines an optimal compressed size. {\bf (a)} For the MLP we used exhaustive grid search (red) in 6 dimensions $(\varepsilon^{(1)}_{\rm fib},\varepsilon^{(1)}_{\rm op}, \varepsilon^{(2)}_{\rm fib},\varepsilon^{(2)}_{\rm op}, \varepsilon^{(3)}_{\rm fib},\varepsilon^{(3)}_{\rm op})$
    and exhaustive search in Partition Space in 3 dimensions $(\varepsilon^{(1)}_{\rm fib}, \varepsilon^{(2)}_{\rm fib}, \varepsilon^{(3)}_{\rm fib})$ for fixed values of $\varepsilon_{\rm op}$ (brown, green, magenta). {\bf (b)} For the CNN we did the same exhaustive search in Partition Space in 3 dimensions. {\bf (c)} For the transformer we did Coordinate Descent in 41 dimensional space of $\vec{\varepsilon}_{\rm fib}$ and fixed values of $\varepsilon_{\rm op}$. Note that $\varepsilon_{\rm op}=1$ corresponds to enforcing quasi-fiber conditions only. BLEU (Bilingual Evaluation Understudy) is a standar metric that measures the similarity between a machine-generated text and human-written reference translations.}
    \label{fig:metrics_MLP_CNN}
\end{figure}

Note that Eq. \ref{eq:total-loss-change} is an efficient estimation of $\Delta L$ because we reduce the computational cost of computing $L$, which is usually high for large datasets. In contrast,  $\bf G$ is computed automatically during training of the original network and does not depend on $\vec{\varepsilon}$. Only $\Delta W$ dependent on $\vec{\varepsilon}$, as they determine the coloring of the nodes.

\subsection{Redundant feature pruning is a special case of fibration compression}
\label{sec:compression-permutation}

A previous work \cite{ayinde2019redundant} proposed to identify nodes that have the same input and output weights, and compress networks by removing these duplicate nodes to accelerate forward inference. Here we show that this is a special case of covering symmetry, namely, a trivial permutation of nodes in a single layer, i.e., Eq. 
(\ref{eq:fibration-definition}) and (\ref{eq:opfibration-definition}) without the sums. Their method organizes weights as follows:

\[
\mathbf{W}^{(\ell)} \mapsto 
\begin{bmatrix}
  \text{vect}(W[:,:,1,1]) & ... & \text{vect}(W[:,:,1,d_L]) \\
  \text{vect}(W[:,:,2,1]) & ... & \text{vect}(W[:,:,2,d_L]) \\
  ... & ... & ... \\
  \text{vect}(W[:,:,d_{\ell-1},1]) & ... & \text{vect}(W[:,:,d_{L-1},d_L]) \\  
\end{bmatrix}
=
\begin{bmatrix}
    w_1 & ... & w_{d_L}
\end{bmatrix}
\]
and uses the following similarity metric for agglomerative clustering
\begin{equation}
    \text{Sim}(C_a, C_b) = \frac{1}{|C_a||C_b|} \sum_{i \in C_a, j \in C_b} w_i \cdot w_j
\end{equation}

In the strictest case (sim = 1), $i,j$ are in the same cluster if $w_i = w_j$. That means, $\forall n = 1,...,d_{\ell-1}: W_{in} = W_{jn}$, which corresponds to trivial permutation symmetry.

\subsection{Orthogonal regularization is a particular case of symmetry breaking}
\label{si:orthogonal}

``Node collapse'' sometimes refers to nodes in an artificial neural network that become correlated in their activity, representing the same features of the data. Note that this phenomenon can signal the emergence of fibration symmetry due to the synchronization theorem. It was reported in \cite{papyan2020prevalence,doimo2022redundant}. Several ad hoc ways to avoid correlations exist. For example, orthogonal regularization prevents node collapse, forcing weight matrices to be orthogonal \cite{liu2022bridging}. Networks are initialized with random connections that are generally orthogonal, and this method ensures that the weight updates maintain this orthogonality.  

Assume a linear network $y = W x \in \mathbb{R}^{d_1}$ with $x \in \mathbb{R}^{d_0}$ y $W \in \mathbb{R}^{d_1 \times d_0}$. For simplicity, assume that each column of $W$ is normalized. Orthogonal regularization aims to reduce $||I-W^T W|| = ||I-1+1-W^TW|| = ||I-1+D_W||$ (see the definition of $D_W$ in \ref{material:refinement-coloring}). That is, orthogonal regularization expects $D_W$ to have zeros on the diagonal and ones everywhere else. This implies that if a clustering algorithm is applied to $D_W$, all dimensions will be in distinct clusters. In other words, the fibers in the output layer will all be trivial. Consequently, if the weights for each output node are orthogonal, no fibers exist. Then, orthogonality is a way to prevent fiber formation and avoid node synchronization

\subsection{Continual learning}
\label{material:resultsDCL}

We explore the Fibration Symmetry Breaking algorithm to resolve the problem of loss of plasticity encountered in sequential learning scenarios. Our investigation focuses on the Continual ImageNet task, comprising 5,000 binary classification tasks formed by pairing distinct classes from ImageNet. A deep neural network is trained on images from two classes per task and evaluated on a corresponding test set. ImageNet, with 1,000 classes and approximately 700 images per class, provides the dataset. Each class is divided into 600 training images and 100 test images. Tasks are presented sequentially, with a new task sampling a new pair of classes. Training occurs over 250 epochs per task. Performance, assessed by the accuracy of the test set, highlights a loss of plasticity if the accuracy decreases.

We utilize a CNN with three convolutional layers and three fully connected layers, with the final layer consisting of two nodes for the current task's classes. The output layer is reinitialized for each new task, following common practice. We evaluate standard gradient descent, deep continual learning, and symmetry-breaking gradient descent, all minimizing cross-entropy loss. Each method undergoes 10 independent runs with the same sequence of class pairs, and the network weights are initialized once before the first task.

The results, illustrated in Fig.~\ref{fig:results-formation-and-pruning}g, are representative of a feed-forward convolutional network using unmodified backpropagation. Initially, networks achieved up to 88\% accuracy on early tasks but lost plasticity by the 2,000th task. This pattern was consistent across architectures, parameters, and optimizers, indicating standard deep-learning methods struggle with continual-learning tasks. 

The emergence of covers during learning is exemplified in Fig.~\ref{fig:distrib_fibers} for the sequential ImageNet task. At the start (Epoch 0) all 512 nodes in the layer constitute trivial fibers (of size 1). After some training epochs, large covers emerge. Continual backpropagation and even more so symmetry breaking keep the cover size in check and therefore have a larger number of them, across the 250 epochs of the first task (first row) and even more clearly after 900 tasks (second row).    

To decide when and how many nodes to reinitialize, we have to select a particular schedule. For easier comparison, we use the same re-initialization schedule as in \cite{dohare2024loss}. Namely, nodes are re-initialized at a fixed replacement rate (which is the number of node per gradient step), taken from covers only if they reach a certain size maturity threshold. We trained the network with a momentum term. The results in Fig.~\ref{fig:results-formation-and-pruning}g are for a learning rate of 0.01, a replacement rate of 3e-4, a maturity threshold of 100 and a decay rate of 0.99. Minibatch sizes of 100, a momentum of 0.9.

\begin{figure}[ht]
\centering
\includegraphics[scale=0.24]{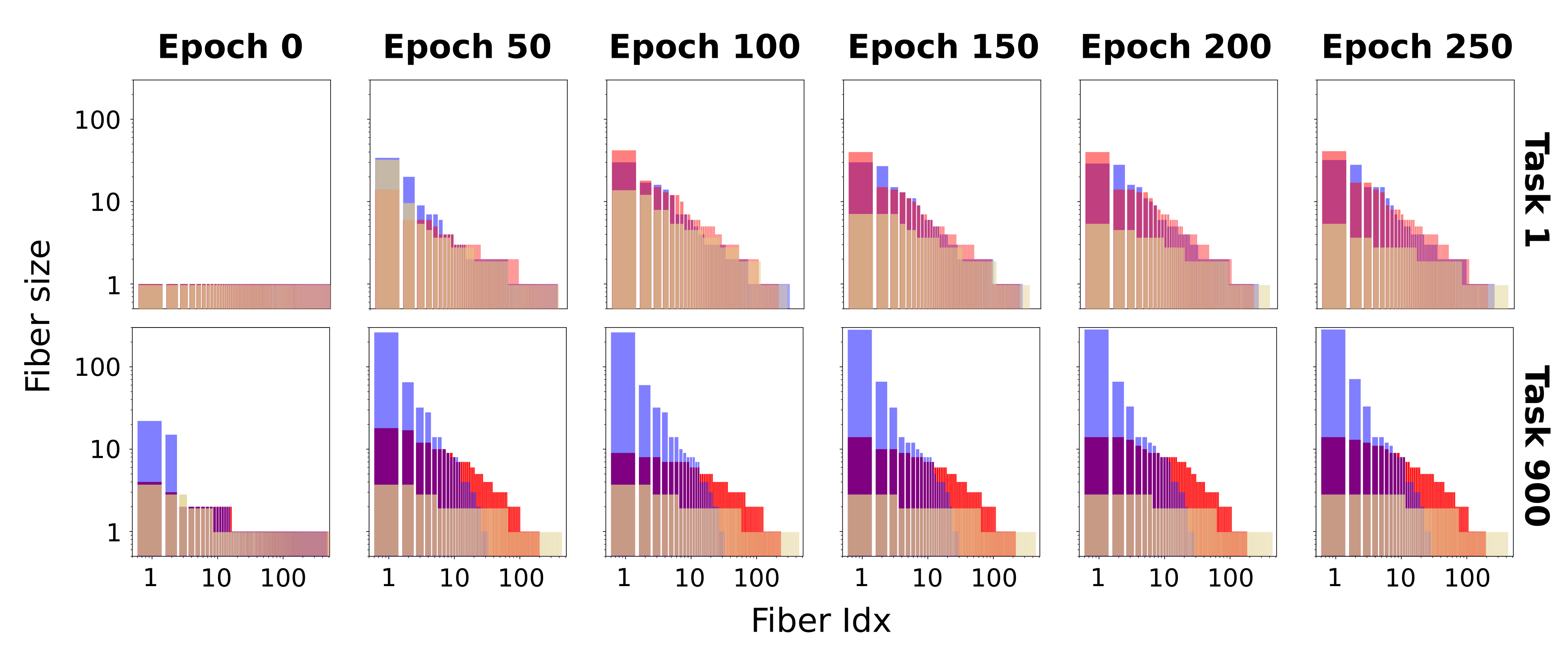}
\caption{\textbf{Distribution of covers, fibers and opfibers during continual learning on the Sequential ImageNet Task.} Evolution of the size and number of covers across leaning epochs (columns) and task number (rows) for different training methods: (blue) SGD (red) continual backpropagation (yellow) FSB. This is the distribution of sizes for the second dense layer.}
\label{fig:distrib_fibers}
\end{figure}

\subsection{Deep Continual Learning is a particular case of Fibration Symmetry Breaking}
\label{material:DCL}

In Deep Continual Learning (DCL) algorithm proposed in \cite{dohare2024loss}, the contribution utility  is
\begin{align*}
    u^{(\ell)}_i(t+1) &= \eta. u^{(\ell)}_i(t) + (1-\eta) F^{(\ell)}_i(t), \\
    F^{(\ell)}_i(t) &= h^{(\ell)}_i(t).\sum_{k=1}^{d_{\ell+1}} W^{(\ell+1)}_{ki} (t)
\end{align*}
with $u^{(\ell)}_i(0) = 0$. 

If there are two nodes $i \underset{cov,\ell,t}{\sim} j$, then $F^{(\ell)}_i(t) = F^{(\ell)}_j(t)$ because $h^{(\ell)}_i = h^{(\ell)}_j$ for fibration symmetry and $\sum_{k=1}^{d_{\ell+1}} W^{(\ell+1)}_{ki} (t) = \sum_{k=1}^{d_{\ell+1}} W^{(\ell+1)}_{kj} (t)$ for opfibration symmetry. 

If there are two nodes $i \underset{cov,\ell,t}{\sim} j$ with identical utilities $u^{(\ell)}_i(t) = u^{(\ell)}_j(t)$, then $u^{(\ell)}_i(t+1) = u^{(\ell)}_j(t+1)$ (for definition of $u$) and $i \underset{cov,\ell,t+1}{\sim} j$. Therefore, we obtain $u_i(s) = u_j(s) \quad \forall t \leq s$. That means that \textit{covering symmetry implies utility synchronization}.

Deep Continual Learning resets nodes belonging to the same cover when their utility reaches zero. For these nodes, input weights are reinitialized randomly, and output weights are fixed at zero. This
modifies the node's activity $h$ while avoiding immediate disruption of the learned network behavior. Altering both input and output
weights necessarily modifies the input and output trees of the nodes, thus breaking their associated covers. In particular, zero-utility nodes typically exhibit null activity ($h=0$) and belong to the same fiber. Thus, DCL inherently targets the breaking of inactive covers. The proposed fibration symmetry-breaking protocol, on the other hand, targets all covers, not only the inactive ones. This explains why the fibration breaking approach overcomes DCL in continual learning in Fig. \ref{fig:results-formation-and-pruning}g.

\subsection{Fibers are a refinement of clustering partitions}
\label{material:fibers-clusters}

Consider a  classification problem where each input $x$ has to be classified by output $y$ into one of the $N$ classes. For each class$c$, we define $P^{(\ell)}_c$ as the partition obtained by the correlation matrix of activity with the examples drawn from class $c$. Each set $A \in P^{(\ell)}_c$ is a cluster of nodes in layer $\ell$ where the activity is synchronized for all inputs in class $c$ (i.e. $h_i(x) = h_j(x) \quad \forall x \in c$). We refer to these as class-conditional synchronization clusters. Examples are shown in Fig. \ref{fig:clusters_class}.

\begin{figure}[ht]
\centering
\includegraphics[width=0.9\linewidth]{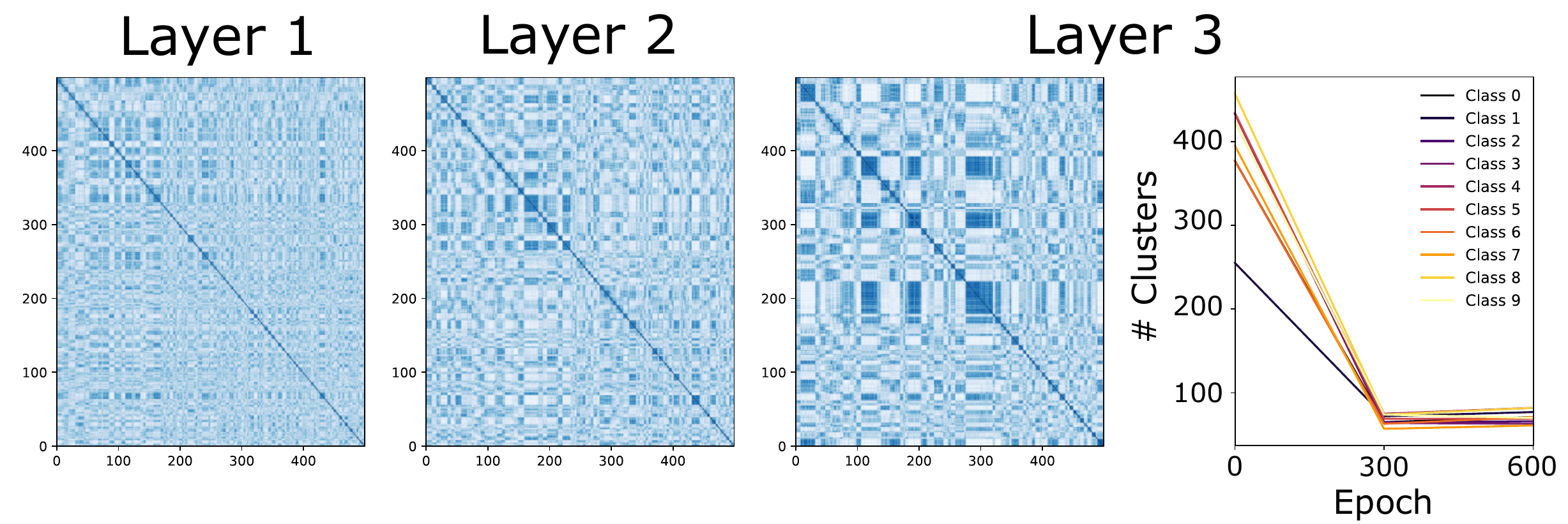}
\caption{Class-conditional synchrony clusters for class 9 in multiples layers of the MLP trained on MNIST. On the right, we show the number of clusters for layer 3 as function of learning epochs for all 10 classes. }
\label{fig:clusters_class}
\end{figure}

Compute the intersection of clusters from different classes
\begin{equation*}
  I = A_1 \cap A_2 \cap ... \cap A_{N}  
\end{equation*}

Two nodes in $I$ are synchronized for all inputs in the data set; however, they may not be synchronized for an input that is not part of the data set. That is, a fiber is always contained within a set like $I$. In other words, the partition formed by the fibers is a refinement of the partition formed by sets of the form $I$ (i.e. intersection of clusters across classes in Fig. \ref{fig:sync}b).

To test this theory numerically, we threshold the correlation coefficients for the synchrony clusters as well as the fibration condition - Eq. (\ref{eq:fibration-definition}), and then measure a refinement score between these two partitions of nodes (Fig. \ref{fig:scores}b). The \textbf{refinement score}, counts the number of fibers that have a unique cluster label normalized by the total number of fibers. For small thresholds, the theoretical result is validated (refinement score=1) while for larger cluster thresholds the relationship breaks down (refinement score=0), as expected. This observation is more generally true for any subset of all possible inputs, including inputs generated at random.

The synchronization theorem states that fibers imply synchronization for any arbitrary input, but the reverse is not necessarily true. We test the reverse of this statement empirically by comparing the fiber partition with synchronization clusters for random inputs. We measure how well the two partitions match using as \textbf{matching score}. For this we use the ``clustering accuracy", which is the maximized sum of the diagonal of the confusion matrix divided by the total number of items. The maximum assignment of partition labels is found with the linear sum assignment algorithm.  For small thresholds, the fiber partition and synchronization clusters are a perfect match (matching score=1, Fig.~\ref{fig:scores}a), showing that fibers can also be identified simply by correlating nodes under random inputs.  

\begin{figure}[ht]
\centering
\includegraphics[width=0.7\linewidth]{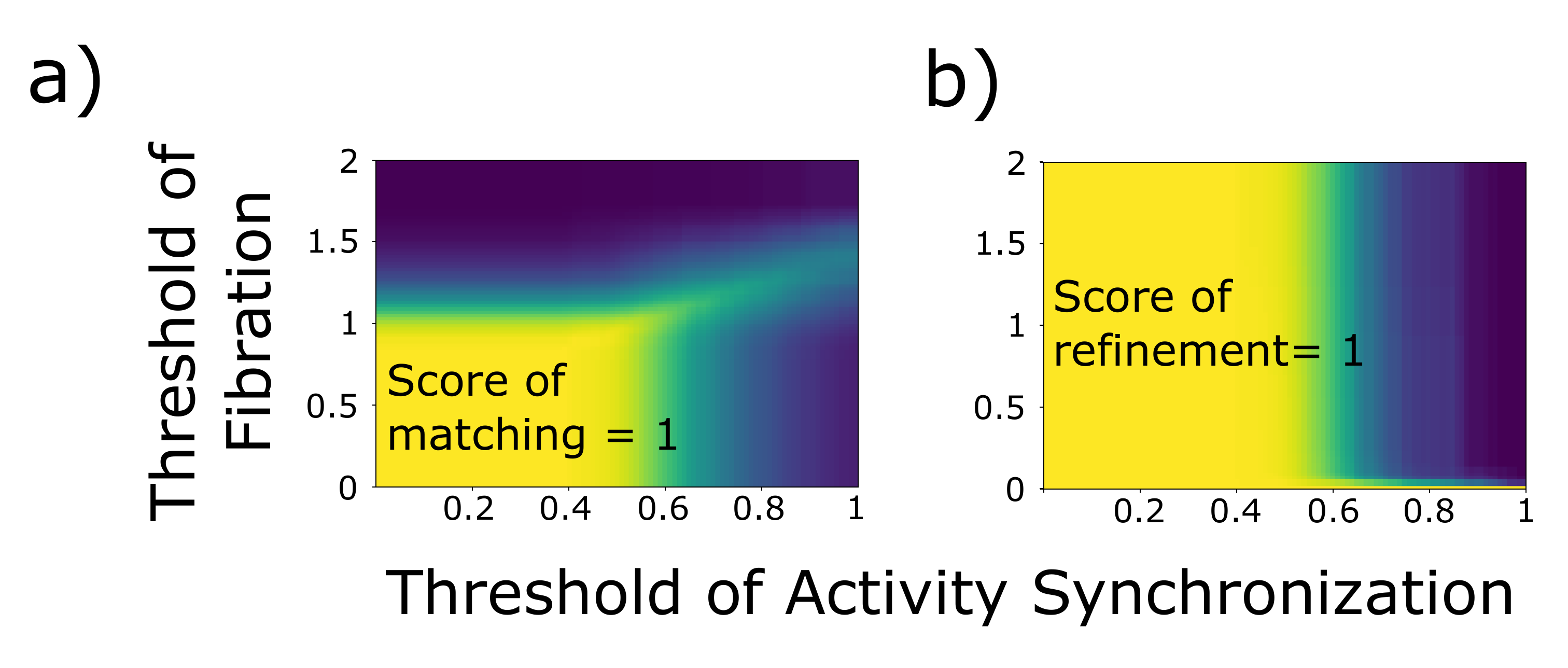}
\caption{Matching score and refinement score compare the these clusters with fibers for different thresholds of identifying the clusters (horizontal) and fibers (vertical)}
\label{fig:scores}
\end{figure}

\subsection{Hierarchical organization of the loss landscape under SGD dynamics}
\label{sec:hierarchical}

Cover formation is a hierarchical coarse-graining under the SGD
dynamic in the parameter and node space. This is exemplified in
Fig.~\ref{fig:sync}b-c, where we show the state of the network in
three different training stages ($T_1, T_2, T_3$), their transitions
$T_{1\to2}$ and $T_{2\to3}$, and the evolution of three nodes $(i, j,
k)$ as covers merge.

Covers are a refinement of fibers. Fibers are a refinement of
synchronization clusters, and these in turn are a refinement of
class-conditional synchrony clusters. This is exemplified in
Fig. \ref{fig:sync}b, along with the development of covers and
class-conditional clusters during learning. The red box groups the
class-conditional clusters for the red class (and similarly
for the blue class) at time $T_1$. Covers in green are a refinement of both red and blue clusters. As learning progresses, the covers merge at transitions $T_{1\to2}$ and $T_{2\to3}$ according to the
Cover Coarse-Graining Theorem.

Figure \ref{fig:sync}c is a symbolic representation of a parameter space (weights). The potential learning trajectories (black curves) merge the covers containing nodes $i, j$ and $k$. During the learning phase $T_1$, the three nodes $i, j,$ and $k$ belong to distinct covers. At a later point in the training, $i$ and $j$ merge into the same cover. According to the Cover Coarse-Graining Theorem, once this occurs, they cannot separate again. In other words, the first time $i$ and $j$ share a cover defines an irreversible transition (green dashed line $T_{1\to2}$). A similar transition occurs when the cover containing node $k$ merges into the cover containing $i$ and $j$, defining a second transition line $T_{2\to3}$. The colored zones in this space are regions where the network has the same cover base $B$ (with identical nodes and weights). 

The loss landscape $\mathcal{L}(w)$, with $w \in \mathbb{R}^d$, is
characterized by a hierarchy that can be formally represented as a
tree (Fig. \ref{fig:sync}d). All network configurations in this
$d$-dimensional parameter space that have the same minimal base $B''$
will have the same loss value. The root of the tree is this base in
$\mathbb{R}^{d''}$; while the leaf nodes are all the possible lifting
of the base in $\mathbb{R}^d$. The tree contains multiple leaves
because $\varphi_{\rm fib}$ can reduce different graphs $G$ into the
same minimal base $B''$ (This is evident in
Eq. (\ref{eq:fib-weight-compression}), as a linear subset of all
possible $W^{(\ell)}$ can generate the same $\hat{W}^{(\ell)}$). The
inverse lifting operation can transform a representation from
$\mathbb{R}^{d''}$ (minimal base $B''$) to $\mathbb{R}^d$ (original
network $G$) through an intermediate lifting to $\mathbb{R}^{d'}$
where $d'' < d' < d$. These intermediate steps create the hierarchy of
the trees (see Methods \ref{material:lifting} -
Fig. \ref{fig:lifting}).

The specific shape of the cost function in this hierarchy can appear
as disjoint local minima in the full parameter space of $G$ (orange
curves, point 3.1) corresponding to a single point the parameter
space of the base $B''$. In other cases, the minimum can be flat
valleys in the loss landscape (purple surface, point
2.1). Elsewhere, the structure of the loss function is replicated,
creating more complex patterns (green curves, point 1).

The learning trajectory navigates downhill in this loss landscape
(solid black line, in Figs. \ref{fig:sync}c-d) following the hierarchical
coarsening of the covers under the SGD learning dynamics, where the system progressively explores increasingly coarse-grained symmetry quotients during
optimization. This structure has analogies in the ultrametric organization of associative memory in neural networks \cite{parga1986theultrametric} and the energy landscape in spin glasses \cite{mezard1987spin}.

\begin{figure}[ht!]
\includegraphics[scale=.08]{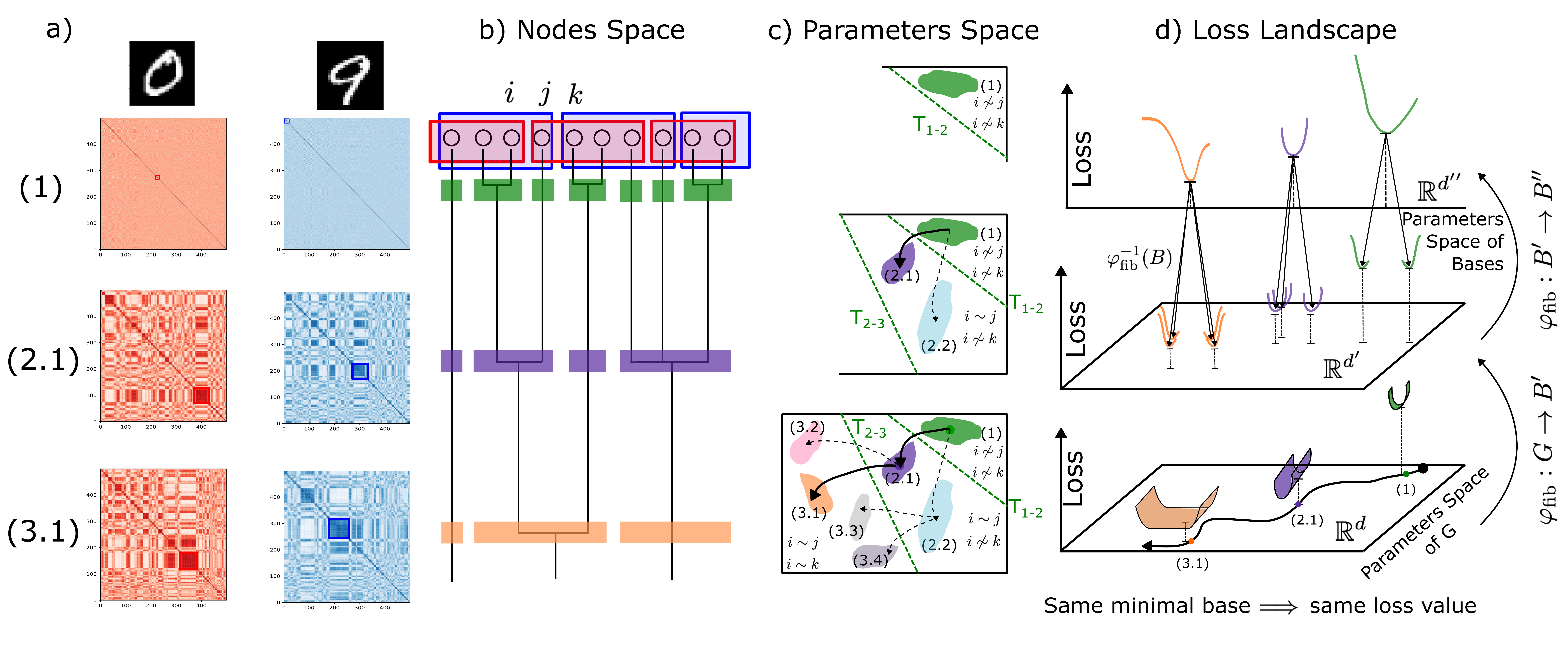}
\caption{ \textbf{Synchronization, hierarchical clustering and loss
    landscape.}  
  {\bf (a)} Class-conditional correlation matrix of
  activity with network inputs drawn from individual classes. Here
  they are shown for the last layer of an MLP trained on MNIST in early epochs (1), in middle of the training (2.1) and after training (3.1). Clusters in this
  matrix are referred to as synchrony clusters.  
  {\bf (b)} Temporal
  evolution of covers in one layer. In one of the early epochs, (1), we identify synchrony
  clusters (red and blue) and covers (green). Covers are located within cluster
  intersections. In later epochs, (2.1) and (3.1), covers merged to
  form coarser partitions (purple and orange). Nodes $i$, $j$ and $k$
  start in different covers (green), but merge into the same cover (purple and orange).  
  {\bf (c)}
  A network is a point in parameter space, e.g. at time point (1). Multiple networks can have the same basis $B$ (green region). Learning trajectories (black curves) merge nodes $i$ and $j$ into the same cover at an
  irreversible transition boundary $T_{1-2}$. Multiple $B$ may have
  $i$ and $j$ in the same cover (e.g., network (2.1) and
  (2.2)). SGD selects one of these possibilities. Once $k$ merges into
  the same cover, another irreversible transition occurs $T_{2-3}$.
  {\bf (d)} Hierarchical loss landscape (see text for details).}
\label{fig:sync}
\end{figure}

\subsection{Low-rank weight compression}
\label{sec:compression-SVD}

An existing heuristic for network compression identifies input weights that are (nearly) collinear using a singular value decomposition (SVD) of the weight matrix. The weight matrix in any layer of a network can be factorized, removing null or low-variance dimensions, without substantially altering the function of the network. The factorization then has a smaller parameter count compared to the original network. Several methods were proposed simultaneously following this idea, e.g. \cite{jaderberg2014speeding,denton2014exploiting,novikov2015tensorizing}. In modern LLMs, the removal of low-variance dimensions of weight matrices can even improve performance without further tuning \cite{sharma2023truth}, and can achieve compression factors of $\approx$3x \cite{tomut2024compactifai}. 

In principle, compression with SVD followed by fibration compression can be easily combined. SVD factors a matrix as $W \approx W^{(2)} W^{(1)}$ where $W^{(2)}$ has $k$ orthonormal columns and $W^{(1)}$ has $k$ orthogonal rows. Essentially, this is equivalent to adding an intermediate linear layer with $k$ nodes. The approximation is exact if $k$ is the rank of $W$. With $d_{\rm out} d_{\rm in}$ parameters for $W$, the factorization has $k(d_{\rm out}+d_{\rm im})$ parameters, which is more compact if one selects $k<d_{\rm out} d_{\rm in}/(d_{\rm out}+d_{\rm in})$. To make this effective, the dimension $k$ of the subspace must be selected small, at the risk of incurring an error, which is typically corrected by the subsequent fine-tuning of the matrices $W^{(1)}$ and $W^{(2)}$. 
For instance, in a square matrix $d_{\rm in}=d_{\rm out}=1,000$ the rank $k$ needs to be smaller than $500$ for a compression to be effective. Then, a parameter compression factor of 4x would require a rank compression of 8x. 
Applying a fibration compression to this simply reduces the size of the new individual factors matrices: 
\begin{equation*}
    \hat{W}_{c'c} 
    = \frac{1}{|c'|} \sum_{i \in c'} \sum_{j \in c} \sum_k W_{ik}^{(2)} W_{kj}^{(1)}
    = \sum_k \hat{W}_{ik}^{(2)} \hat{W}_{kj}^{(1)}   .
\end{equation*}
For this representation to still be smaller, $k$ has to be now smaller than $k<\hat{d}_{\rm out} \hat{d}_{\rm in}/(\hat{d}_{\rm out}+\hat{d}_{\rm in})$, i.e. with the smaller node dimensions after fibration compression. So SVD first followed by fibration compression can be readily combined, but the compression efficiency of SVD is potentially lost. An SVD compression after fibration compression may potentially find additional co-linear nodes that are not identified by fibration compression. 

From our current perspective, we can prove that if the input of any layer lies in a linear subspace, then the input weight matrix of that layer will be attracted by SGD into that subspace. Specifically, if $W$ is in the linear subspace of the input $X$, then so is the gradient $\partial L/\partial W = \Delta X^T$ (both $X$ and $\Delta$ are here matrices of dimensions $\times$ exemplars). Therefore, the linear subspace defined by the data becomes an invariant set for the weight matrix, and will be an stable attractor for SGD, following \cite{chen2023stochastic}. For high-dimensional representation layers, it is quite possible that the input is rank-deficient, which would explain why low-rank weight matrices are found empirically and are amenable to compression.  

SVD of a weight matrix $W$ reveals a symmetry of algebraic nature: it decomposes the transformation into pure rotations $U$, $V$, and scalings $\Sigma$. However, this type of symmetry is not part of the global-local hierarchy of graph symmetries shown in Fig. \ref{fig:hierarchy}. Graph symmetries define a partition of nodes, where nodes in the same class are structurally indistinguishable under one criterion (e.g., isomorphic input trees in fibration symmetries). Although graph symmetries identify structural redundancies, SVD uncovers the underlying geometric simplicity of the mapping $W = U \Sigma V^T$.

\clearpage

\section*{Acknowledgements}
Partial support for this work was provided by the National Institutes of Health through grants R01CA247910 (OMV, LCP, HAM) and R01EB028157 (HAM), as well as the Army Research Office with grant WF911-NF-24-1-0031 (OMV, LCP). We thank Matteo Serafino for his help and discussions.

\section*{Declarations}

\paragraph*{Funding:}
Partial support for this work was provided by the National Institutes of Health through grants R01CA247910 (OMV, LCP, HAM) and R01EB028157 (HAM), as well as the Army Research Office with grant WF911-NF-24-1-0031 (OMV, LCP).

\paragraph*{Conflict of interest/Competing interests:}
There are no competing interests to declare.

\paragraph*{Ethics approval and consent to participate:}
N/A

\paragraph*{Consent for publication:}
N/A

\paragraph*{Data, materials, and code availability:}
The code is available at \href{github.com/MakseLab/fibrations_in_dnns}{Github/MakseLab/fibrations$\_$in$\_$dnns}

\paragraph*{Author contributions:}
HAM conceived the project. OMV, HAM, and LCP designed the experiments, contributed to the mathematical proofs, and wrote the paper. All authors contributed to the analysis of data. OMV and AH developed the analysis code.

\end{document}